\documentclass[11pt,letterpaper]{article}

\usepackage[letterpaper,left=1.05in,right=1.05in,top=1in,bottom=1in]{geometry}
\usepackage[T1]{fontenc}
\usepackage[utf8]{inputenc}
\usepackage[english]{babel}

\usepackage{mathpazo}
\usepackage{tgpagella}
\usepackage{inconsolata}
\usepackage{microtype}
\usepackage{amsmath,amsfonts,amssymb,mathtools}
\usepackage{amsthm}
\usepackage{bbm}
\usepackage{graphicx}
\usepackage{booktabs}
\usepackage{multirow}
\usepackage{makecell}
\usepackage{tabularx}
\usepackage{array}
\usepackage{wrapfig}
\usepackage{nicefrac}
\usepackage{siunitx}
\usepackage[table,dvipsnames]{xcolor}
\usepackage{pifont}
\usepackage{fontawesome5}
\usepackage{soul}
\usepackage{tikz}
\usetikzlibrary{positioning,arrows.meta,fit,backgrounds,decorations.pathmorphing,calc}
\usepackage[most]{tcolorbox}
\usepackage{enumitem}
\usepackage{placeins}
\usepackage{caption}
\usepackage{subcaption}
\usepackage{authblk}
\usepackage[explicit]{titlesec}
\usepackage{fancyhdr}
\usepackage{lastpage}

\usepackage{algorithm}
\usepackage{algorithmic}

\usepackage{listings}
\PassOptionsToPackage{authoryear,round}{natbib}
\usepackage{natbib}

\usepackage[bookmarks=true,bookmarksopen=true]{hyperref}
\usepackage[capitalize,noabbrev]{cleveref}

\definecolor{inkblack}  {HTML}{0A0A0A}   
\definecolor{inkdark}   {HTML}{1A1A1A}   
\definecolor{inkmid}    {HTML}{2B2B2B}   
\definecolor{inkgrey}   {HTML}{4A4A4A}   
\definecolor{inkmuted}  {HTML}{6E6E6E}   
\definecolor{inkfaint}  {HTML}{8C8C8C}   
\definecolor{inkrule}   {HTML}{BFBFBF}   
\definecolor{inkbg}     {HTML}{F7F7F7}   
\definecolor{inkbg2}    {HTML}{F0F0F0}   
\definecolor{inkframe}  {HTML}{D9D9D9}   

\hypersetup{
    colorlinks  = true,
    linkcolor   = inkblack,
    citecolor   = inkmid,
    urlcolor    = inkmid,
    filecolor   = inkblack,
    pdfborder   = {0 0 0},
    pdftitle    = {ProbeLLM: Automating Principled Diagnosis of LLM Failures},
    pdfauthor   = {Yue Huang et al.}
}

\definecolor{stageblue}{HTML}{1F77B4}
\definecolor{stagegreen}{HTML}{2CA02C}
\definecolor{stageorange}{HTML}{FF7F0E}
\definecolor{stagered}{HTML}{D62728}

\definecolor{rq1}{HTML}{1F77B4}
\definecolor{rq2}{HTML}{FF7F0E}
\definecolor{rq3}{HTML}{2CA02C}
\definecolor{rq4}{HTML}{D62728}
\definecolor{rq5}{HTML}{9467BD}
\definecolor{rq6}{HTML}{8C564B}
\definecolor{rq7}{HTML}{17BECF}
\definecolor{rq8}{HTML}{BCBD22}
\definecolor{rq9}{HTML}{7F7F7F}

\definecolor{CentralFail}{RGB}{235,242,250}
\definecolor{BoundaryFail}{RGB}{255,242,230}
\definecolor{NearSuccess}{RGB}{236,248,240}
\definecolor{SoftPurple}{HTML}{B8A8D8}

\definecolor{HeaderPurple}{HTML}{E9D8FD}
\definecolor{RowPurple}{HTML}{F6F0FF}
\definecolor{RowPurpleA}{HTML}{FBF9FF}
\definecolor{RowPurpleB}{HTML}{F3EEFF}
\definecolor{RowGray}{HTML}{FAFAFC}
\definecolor{RowBlue}{HTML}{E8F1FA}
\definecolor{HeaderBrown}{HTML}{8C564B}

\newcommand{\methodName}{\textsc{ProbeLLM}}

\newcommand{\cmark}{\textcolor{green!60!black}{\ding{51}}}
\newcommand{\xmark}{\textcolor{red!70!black}{\ding{55}}}

\newcommand{\ra}[1]{\cellcolor{RowPurpleA}{#1}}
\newcommand{\rb}[1]{\cellcolor{RowPurpleB}{#1}}

\newcommand*\circled[1]{%
\tikz[baseline=(char.base)]{
  \node[shape=circle, draw=Purple!60, fill=Purple!10, thick, inner sep=1pt] (char) {\scriptsize\textsf{#1}};
}}

\newcommand{\rqtag}[2]{%
\tikz[baseline=(rq.base)]\node[
    inner sep=2.0pt,
    rounded corners=2.5pt,
    draw=#1,
    line width=0.8pt,
    fill=#1!12,
    text=#1!85!black,
    font=\bfseries\footnotesize
](rq){RQ#2};%
}

\theoremstyle{plain}
\newtheorem{theorem}{Theorem}[section]

\newtheorem{lemma}[theorem]{Lemma}

\theoremstyle{definition}
\newtheorem{definition}[theorem]{Definition}
\newtheorem{assumption}[theorem]{Assumption}
\theoremstyle{remark}

\titleformat{\section}
  {\Large\bfseries\color{inkblack}}{\thesection}{0.6em}{#1}[]
\titleformat{\subsection}
  {\large\bfseries\color{inkmid}}{\thesubsection}{0.55em}{#1}[]
\titleformat{\subsubsection}
  {\normalsize\bfseries\itshape\color{inkmid}}{\thesubsubsection}{0.5em}{#1}[]
\titlespacing*{\section}      {0pt}{2.6ex plus 0.8ex minus 0.4ex}{1.0ex}
\titlespacing*{\subsection}   {0pt}{2.0ex plus 0.7ex minus 0.3ex}{0.8ex}
\titlespacing*{\subsubsection}{0pt}{1.5ex plus 0.5ex minus 0.2ex}{0.6ex}

\setlist[itemize,1]{label=\textbullet,leftmargin=1.4em,topsep=2pt,itemsep=2pt,parsep=0pt}
\setlist[itemize,2]{label=$\circ$,leftmargin=1.2em}
\setlist[enumerate,1]{leftmargin=1.6em,topsep=2pt,itemsep=2pt,parsep=0pt}

\renewcommand\Affilfont{\small\color{inkmuted}}
\makeatletter
\renewcommand\AB@affilsepx{,\enspace\protect\Affilfont}
\makeatother

\renewcommand{\headrulewidth}{0pt}
\renewcommand{\footrulewidth}{0pt}
\fancypagestyle{firststyle}{%
  \fancyhf{}%
  \fancyfoot[C]{\footnotesize\color{inkmuted}\thepage\ /\ \pageref{LastPage}}%
}

\title{\textsc{ProbeLLM}: Automating Principled Diagnosis of LLM Failures}

\author[1]{Yue Huang\textsuperscript{\textdagger,*}}
\author[1]{Zhengzhe Jiang\textsuperscript{\textdagger}}
\author[2]{Yuchen Ma\textsuperscript{\textdagger}}
\author[1]{Yu Jiang}
\author[1]{Xiangqi Wang}
\author[1]{Yujun Zhou}
\author[3]{Yuexing Hao}
\author[1]{Kehan Guo}
\author[4]{Pin-Yu Chen}
\author[2]{Stefan Feuerriegel}
\author[1]{Xiangliang Zhang\textsuperscript{*}}

\affil[1]{University of Notre Dame}
\affil[2]{LMU Munich}
\affil[3]{Massachusetts Institute of Technology}
\affil[4]{IBM Research}

\newcommand{\equalcontributionnote}{%
  \textsuperscript{\textdagger}\,These authors contributed equally to this work.\quad
  \textsuperscript{*}\,Corresponding authors: \texttt{yhuang37@nd.edu}, \texttt{xzhang33@nd.edu}.}

\newcommand{\paperstatus}{Accepted at the International Conference on Machine Learning (ICML 2026).}
\newcommand{\paperurl}{https://github.com/HowieHwong/ProbeLLM}
\newcommand{\projecturl}{https://ff56-c65d.netlify.app/}

\makeatletter
\newcommand{\makecoverpage}{%
  \thispagestyle{firststyle}%
  \vspace*{-1.0em}%
  \noindent
  {\sffamily\bfseries\color{inkblack}\large ProbeLLM}\hspace{0.6em}%
  {\sffamily\color{inkmuted}\small Automated LLM Failure Diagnosis}\hfill
  {\sffamily\bfseries\footnotesize\color{inkblack}%
   \setlength{\fboxsep}{3pt}\colorbox{inkbg2}{ICML 2026}}\par
  \vspace{4pt}
  {\color{inkblack}\rule{\linewidth}{1.2pt}}\par
  \vspace{1.6em}

  {\raggedright\bfseries\color{inkblack}\fontsize{20pt}{25pt}\selectfont
   \@title\par}
  \vspace{1.0em}

  {\raggedright\@author\par}
  \vspace{0.35em}
  {\itshape\color{inkmuted}\paperstatus\par}
  \vspace{0.25em}
  {\color{inkmid}%
   \faGithub\ \href{\paperurl}{Toolkit}\hspace{1.4em}%
   \faBook\ \href{\projecturl}{Documentation}\par}
  \ifx\equalcontributionnote\@empty\else
    \vspace{0.15em}%
    {\footnotesize\itshape\color{inkmuted}\equalcontributionnote\par}%
  \fi
  \vspace{1.2em}

  \begin{tcolorbox}[
      enhanced, sharp corners, width=\textwidth,
      colback=inkbg, colframe=inkbg, boxrule=0pt,
      left=16pt,right=16pt,top=12pt,bottom=12pt
  ]
    {\bfseries\color{inkblack}\large Abstract}\par
    \vspace{0.45em}
    {\small\color{inkdark}\theabstract\par}
  \end{tcolorbox}
  \vspace{0.8em}
}
\makeatother

\newcommand{\theabstract}{%
Understanding how and why large language models (LLMs) fail is becoming a central challenge as models rapidly evolve and static evaluations fall behind. While automated probing has been enabled by dynamic test generation, existing approaches often discover isolated failure cases, lack principled control over exploration, and provide limited insight into the underlying structure of model weaknesses. We propose \methodName, a benchmark-agnostic automated probing framework that elevates weakness discovery from individual failures to structured \emph{failure modes}. \methodName\ formulates probing as a hierarchical Monte Carlo Tree Search, explicitly allocating limited probing budgets between global exploration of new failure regions and local refinement of recurring error patterns. By restricting probing to verifiable test cases and leveraging tool-augmented generation and verification, \methodName\ grounds failure discovery in reliable evidence. Discovered failures are further consolidated into interpretable failure modes via failure-aware embeddings and boundary-aware induction. Across diverse benchmarks and LLMs, \methodName\ reveals substantially broader, cleaner, and more fine-grained failure landscapes than static benchmarks and prior automated methods, supporting a shift from case-centric evaluation toward principled weakness discovery.%
}

\begin{document}

\makecoverpage

\section{Introduction}

Large language models (LLMs) are advancing at a pace that increasingly outstrips the evaluation ecosystems designed to assess them \citep{Chang2023ASO}. While new benchmarks continue to emerge, most evaluations remain static snapshots of model behavior: prompts, tasks, and failure criteria are fixed in advance \citep{ni2025survey, huang2025trustworthiness}. As models evolve, their error distributions shift accordingly, which makes it difficult for static evaluations to surface emerging weaknesses.

A natural response to these limitations has been to pursue active automation in evaluation and weakness discovery \citep{hou2025automatic, wang2025adaptive, lin-etal-2025-fact, cheng2024autodetect}. Recent work has explored automated methods across red-teaming or adaptive probing pipelines that generate test cases on the fly to elicit model failures \citep{liuautodan, huang2025trustworthiness, deng2024masterkey, Song2025DiscoveringKD}. In parallel, another line of work has proposed dynamic benchmarks \citep{zhudyval, li2025autobencher, white2024livebench}, which update or expand evaluation datasets over time in an effort to better track evolving model behavior. 

\begin{figure}
    \centering
    \includegraphics[width=0.6\linewidth]{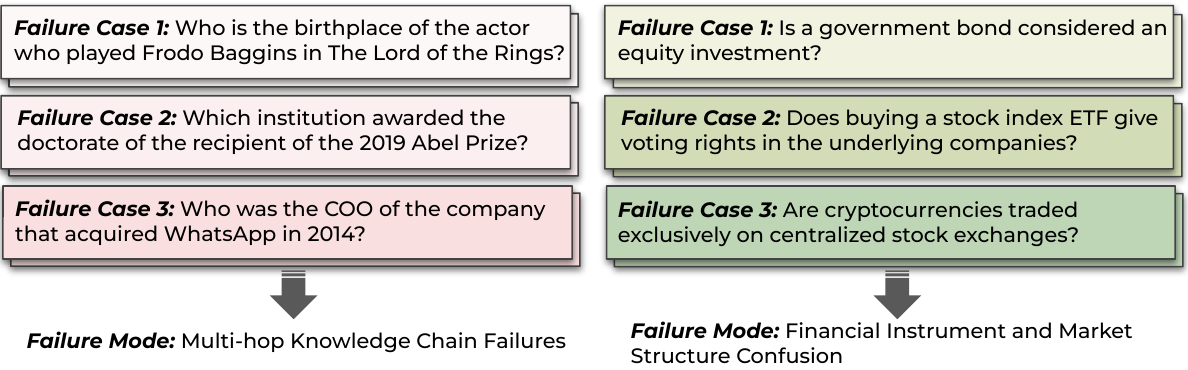}
    \caption{Examples of individual failure cases and the corresponding failure modes that capture recurring error patterns.}
    \label{fig:failure_case_mode_example}
    \vspace{-15pt}
\end{figure}

However, existing automated evaluations face several unresolved challenges.
First, many automated probing pipelines are \emph{not benchmark-agnostic} and require substantial \emph{manual effort} to design data-construction procedures tailored to specific tasks, domains, or model capabilities \citep{zhu2024dynamic, huang2025chemorch, huang2024datagen, lin-etal-2025-fact}.
Moreover, test generation is often driven by heuristic or loosely guided strategies that lack a \emph{principled mechanism for exploration control}.
Without explicit budget-aware allocation, probing effort may be wasted on redundant or low-yield tests, slowing the discovery of emerging weaknesses and limiting scalability. Second, most approaches remain 
\emph{case-centric}, i.e., accumulating individual failure cases
\citep{cheng2024autodetect, rao2025apt},
rather than identifying \emph{failure modes}: recurring, structured patterns that explain how and why models fail across related inputs (as exemplified in \autoref{fig:failure_case_mode_example}).
Third, automated pipelines frequently rely on \emph{
unvalidated model-generated test cases}
\citep{cheng2024autodetect}.
Ambiguous prompts, incorrect ground truths, or generation artifacts can  be misinterpreted as genuine model failures, making it difficult to distinguish true model limitations from evaluation noise
\citep{luo2025better, truong2025fantastic}.

These limitations suggest that automating test generation alone is insufficient.
What is needed is an automated probing approach that explicitly manages exploration under budget constraints, elevates discovery from individual failure cases to interpretable failure modes, and grounds failure identification in verifiable evidence.

To meet these needs, we propose \methodName, an automated, \emph{benchmark-agnostic}\footnote{``Benchmark-agnostic'' here means \methodName\ is a no-benchmark-specific construction pipeline that benchmarks are used only as an optional source of seed queries for initialization, and can be replaced by arbitrary user-provided prompts.} probing framework for discovering and characterizing \emph{failure modes}. \methodName\ formulates failure discovery as a hierarchical Monte Carlo Tree Search (MCTS) process with two complementary regimes: \textsc{Macro} exploration steers probing toward under-explored regions to surface new failure areas, while \textsc{Micro} refinement performs local perturbations around promising seeds to consolidate recurring patterns, enabling principled and efficient budget allocation. To reduce spurious failures, \methodName\ restricts to test cases with \emph{verifiable} ground-truth answers and employs tool-augmented generation for grounded construction and verification. Finally, it converts individual failures into diagnostically meaningful modes via \emph{failure-aware embeddings} and \emph{boundary-aware induction}, summarizing each cluster with representative central and contrastive boundary cases to avoid over-generalization. We evaluate \methodName\ on five benchmarks and a diverse set of LLMs, analyzing its effectiveness, efficiency, and the quality of discovered failure modes. The results show that \methodName\ consistently uncovers broader and more fine-grained failure landscapes than static benchmarks and prior automated methods, while producing more diagnostically meaningful failure modes.

Overall, this paper makes \textbf{three contributions}:
\begin{enumerate}[leftmargin=15pt]
\item We propose \methodName, a benchmark-agnostic automated probing approach for discovering and characterizing failure modes of large language models;
\item \methodName\ introduces a principled failure-mode discovery methodology that formulates probing as a hierarchical MCTS, balances global exploration and local refinement, grounds failures in verifiable test cases, and induces interpretable failure modes via failure-aware embeddings and boundary-aware induction;
\item Through extensive experiments across multiple datasets and models, we demonstrate the effectiveness of \methodName and provide insights for future studies.
\end{enumerate}

\section{Preliminaries}
\label{sec:prelim}

\textbf{Probing target.}
Let $f_{\theta}$ denote the LLM under evaluation (target model).
Our goal is to discover its \emph{
systematic failure modes}: structured patterns of incorrect behavior that reflect persistent model weaknesses rather than isolated errors.

\textbf{Test cases and failures.}
We consider a space of test cases $\mathcal{X}$, where each $x\in\mathcal{X}$ is an input prompt (with optional context).
Querying the target model produces an output $y=f_{\theta}(x)$.
Given a reference answer $y^{\star}(x)$ and a verifier $V:\mathcal{Y}\times\mathcal{Y}\to\{0,1\}$ that returns 1 iff the output is verified as correct, we define the failure indicator: 
\begin{equation}
\mathbb{I}_{\mathrm{fail}}(x)\triangleq \mathbbm{1}\!\left[\,V\!\bigl(f_{\theta}(x),\,y^{\star}(x)\bigr)=0\,\right],
\end{equation}
where $\mathbb{I}_{\mathrm{fail}}(x)=1$ indicates that the verifier rejects the model output (a \emph{failure case}).


\textbf{Adaptive probing.}
An adaptive probing policy selects the next test case by conditioning on past outcomes. Given the history
$\mathcal{H}_{t}=\{(x_i,\mathbb{I}_{\mathrm{fail}}(x_i))\}_{i=1}^{t}$, it generates or selects 
\begin{equation}
x_{t+1} \sim \pi(\cdot \mid \mathcal{H}_{t}),
\end{equation}
following a policy $\pi$. 
After $T$ probes, the set of  failure cases is denoted as 
\begin{equation}
\mathcal{F}_T = \{x_t \mid \mathbb{I}_{\mathrm{fail}}(x_t)=1,\; t=1,\dots,T\}.
\end{equation}

\textbf{From failure cases to failure modes.}
Individual failure cases provide limited diagnostic insight.
We therefore seek \emph{failure modes}, which summarize recurring patterns of failures across related inputs, as illustrated in \autoref{fig:failure_case_mode_example}.

\begin{tcolorbox}[
  colback=cyan!2,
  colframe=cyan!40,
  boxrule=0.2pt,
  arc=2pt,
  left=6pt,right=6pt,top=3pt,bottom=3pt
] \small
\textbf{Failure case.}
A \emph{failure case} is an individual test instance on which the target model produces an incorrect output.

\medskip
\textbf{Failure mode.}
A \emph{failure mode} is a recurring and structured pattern of failures, manifested as a coherent set of failure cases that share a common error mechanism or underlying cause, reflecting \emph{how} the model fails rather than merely \emph{what} the input is about.
\end{tcolorbox}

Formally, a \emph{failure mode} is a structured subset $\mathcal{M}\subseteq \mathcal{F}_T $ on which the model fails consistently.
Failure mode synthesis can be viewed as a \emph{structured set discovery} problem, searching over a hypothesis class $\mathcal{G}\subseteq 2^{\mathcal{F}_T}$ for subsets $\mathcal{M}\in\mathcal{G}$ that capture recurring, high-prevalence failures. 
In practice, this can be implemented  by embedding failure cases into a latent space where similar weaknesses form coherent regions; 
obtaining a mode set $\mathcal{M}$ then amounts to identifying such a coherent region in the latent space.


\textbf{Budget and objective.}
A principled probing policy $\pi$ should maximize the discovery of diverse, recurring failure modes (and their supporting failure cases) within a fixed probing budget (e.g., the number of probing steps or times).
Formally, given budget $T_{\max}$, our objective is to find an adaptive probing policy $\pi$ that  selects test cases $\{x_t\}_{t=1}^{T}$ with $T \le T_{\max}$, so as to maximize the expected utility of the discovered failure set $\mathcal{F}_T$ for failure-mode discovery:
\begin{equation}
\pi^{\star}\in \arg\max_{\pi}\ \mathbb{E}\!\left[U(\mathcal{F}_T)\right]
\quad \text{s.t.}\quad T \le T_{\max}.
\end{equation}
Here $U(\mathcal{F}_T)$ rewards both (i) discovering many distinct failure modes and (ii) collecting sufficient supporting evidence for each mode. For example, letting $K_T$ be the number of discovered modes, $n_k$ the number of failures associated with mode $k$, and $m$ a per-mode cap, we can define
%
$U(\mathcal{F}_T)\;\triangleq\;\sum_{k=1}^{K_T}\min\{n_k,\,m\},$
which encourages breadth (more modes) and depth (enough cases per mode). While the above objective is difficult to optimize directly, it informs the design of our search components and heuristics.


\begin{figure}
    \centering
    \includegraphics[width=0.67\linewidth]{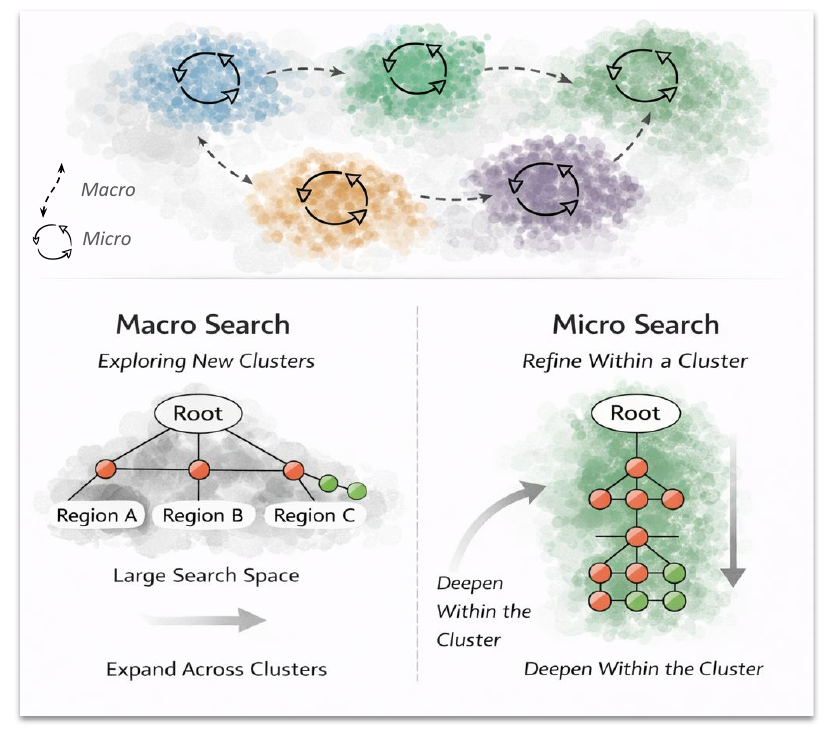}
\caption{Differences of \textsc{Macro} and \textsc{Micro} search strategy. \textsc{Macro} aims to diversify the search topics while \textsc{Micro} aims to enhance local exploration.}
    \label{fig:macro_micro}
\end{figure}

\section{The Proposed \methodName}
\label{sec:method}

\subsection{Overview}
We propose \methodName\ to realize such an adaptive probing policy $\pi$. Specifically, we instantiate $\pi$ with
a \textbf{\ul{hierarchical}} Monte Carlo Tree Search (MCTS) \citep{coulom2006efficient, vien2015hierarchical} that iteratively proposes and verifies test cases, and allocates search effort to regions that yield recurring failures. In this paper, we focus on test cases with well-defined ground-truth answers to improve the reliability of failure verification, and discuss the scope and potential extensions to open-ended settings in Appendix~\ref{app:scope}.


\subsection{Hierarchical MCTS Formulation}
\label{sec:hierachical_MCTS}
The hierarchical MCTS in \methodName\ is illustrated in Figure~\ref{fig:macro_micro} and uses a two-level hierarchy to balance \emph{global exploration} and \emph{local refinement}.
At the top level, it chooses between \textsc{Macro} and \textsc{Micro}, which determines the search strategy used to propose the next probe $x_{t+1}$ (summarized below).
Conditioned on this top-level choice, the second level runs a standard MCTS to select, expand, and evaluate candidate test cases within the corresponding search tree.
\begin{tcolorbox}[
  colback=teal!5,
  colframe=teal!50,
  boxrule=0.6pt,
  arc=2pt,
  left=6pt,right=6pt,top=3pt,bottom=3pt
]\small
\textbf{\textsc{Macro} (coverage).}
Broad exploration to increase topical diversity and surface novel failure regions.

\medskip
\textbf{\textsc{Micro} (refinement).}
Local exploration around a  seed/topic to densify evidence and reveal coherent failure patterns.
\end{tcolorbox}

Next, we introduce the detailed   MCTS procedure.

\circled{1} \textbf{MCTS structure and node representation.} 
\methodName\ maintains a global root node $r$ with two children corresponding to the top-level \emph{search regimes},
$\rho_{\textsc{Macro}}$ and $\rho_{\textsc{Micro}}$.
For each regime
$\mathsf{s}\in\{\textsc{Macro},\textsc{Micro}\}$, it maintains its own search tree rooted at $\rho_{\mathsf{s}}$.
Within each search tree,
a node $u$  corresponds to a \emph{concrete evaluated test case}, represented as a tuple
\begin{equation}
u \equiv \bigl(x,\, y^{\star}(x),\, y(x),\, \mathbb{I}_{\mathrm{fail}}(x)\bigr),
\end{equation}
where $x\in\mathcal{X}$ is the question/prompt of test input, $y^{\star}(x)\in\mathcal{Y}$ is the verified reference (ground-truth) answer,
$y(x)=f_{\theta}(x)$ is the target model output, and
$\mathbb{I}_{\mathrm{fail}}(x)$
indicates whether the model $f_{\theta}$ fails on $x$.
Expanding a node $u$ proposes child test cases conditioned on the current one.
Under \textsc{Macro}, the proposal mechanism diversifies $x$ of $u$  to explore broadly; under \textsc{Micro}, it generates targeted variants around $x$ to densify evidence within a suspected failure region.
Each proposed child is then evaluated by querying $f_{\theta}$ and $V$ to populate its stored tuple.



\begin{figure}[t]
    \centering
    \includegraphics[width=1\linewidth]{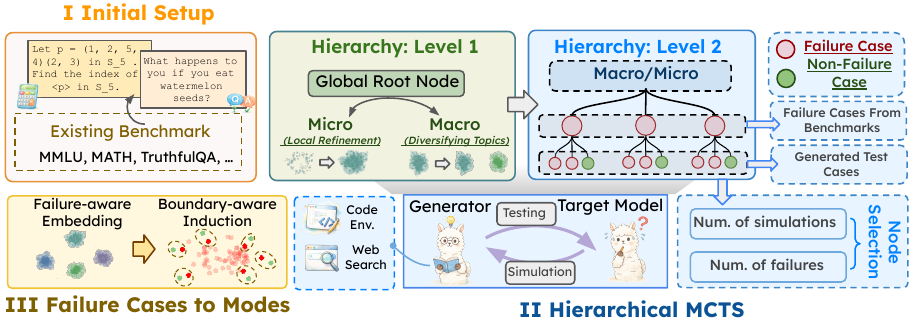}
    \caption{\textbf{Overview of \methodName.}
(I) \methodName\ probes a target model using an LLM-based generator and initializes the search with seed test cases from an existing benchmark.
(II) At Level~1, the hierarchical search selects between \textsc{Macro} and \textsc{Micro} regimes. Conditioned on the selected regime, \methodName\ performs tool-augmented generation to propose new test cases and verifies the target model responses.
(III) From the collected failure cases, \methodName\ computes failure-aware embeddings, clusters failures, and applies boundary-aware induction to produce interpretable failure modes.}
    \label{fig:probellm_method}
\end{figure}

\circled{2} \textbf{Node statistics.}
Following standard MCTS bookkeeping, each node $u$ tracks a visit count $N(u)$; in our setting, a visit corresponds to expanding $u$ by proposing and evaluating a child test case.
We additionally track $E(u)$, the number of these evaluated children that are failure cases.
We use these statistics to guide tree search toward nodes that both (i) tend to yield failures when expanded and (ii) remain under-explored.
In particular,
we estimate the empirical failure rate of \emph{expansions from $u$} by
\begin{equation}
\widehat{p}(u) \triangleq \frac{E(u)}{\max\{1, N(u)\}}.
\end{equation}
This approximates the probability that a newly proposed child conditioned on $u$ becomes a failure case.

\circled{3} \textbf{Node selection via UCB.}
During tree traversal, we select which node to expand using a UCB-style rule that balances exploiting nodes with high empirical failure rates and exploring less-visited nodes.
Concretely, for a parent node $u$ and a candidate child $v\in\mathrm{Ch}(u)$, we define
\begin{equation}
\mathrm{UCB}(v\mid u)
\;\triangleq\;
\widehat{p}(v)
\;+\;
\beta\,\sqrt{\frac{\log\!\bigl(\max\{1,N(u)\}\bigr)}{\max\{1,N(v)\}}},
\end{equation}
where $\beta>0$ controls the exploration--exploitation trade-off.
We then choose
\begin{equation}
v^{\star}\in \arg\max_{v\in\mathrm{Ch}(u)} \mathrm{UCB}(v\mid u),
\label{eq:ucb}
\end{equation}
and set $u\leftarrow v^{\star}$ to continue traversal.
We continue this selection step until reaching a node $\tilde{u}$ that is \emph{expandable}, i.e., it has not reached the maximum width: $|\mathrm{Ch}(\tilde{u})| < W_{\max}.$
 
\label{eq:ucb}


\circled{4} 
\textbf{Expansion via strategy-conditioned generation.}
When expanding a node $u$ under search strategy $\mathsf{s} \in\{\textsc{Macro},\textsc{Micro}\}$, \methodName\ generates a new candidate test case
\begin{equation}
x_{\text{new}} \sim G_{\mathsf{s}}(\cdot \mid u),
\label{eq:generator}
\end{equation}
where $G_{\textsc{Macro}}$ favors topical diversity and coverage, and $G_{\textsc{Micro}}$ favors local perturbations and neighborhood exploration around the selected context. More details are introduced in section \ref{sec:gen}.

\circled{5} \textbf{Simulation and backup.}
Given $x_{\text{new}}$, we query the target model and verify correctness: $\mathbb{I}_{\mathrm{fail}}(x_{\text{new}})\;=\;\mathbbm{1}\!\left[V\!\bigl(f_{\theta}(x_{\text{new}}),\,y^{\star}(x_{\text{new}})\bigr)=0\right]$. 
If $\mathbb{I}_{\mathrm{fail}}(x_{\text{new}})=1$,
we add $x_{\text{new}}$ to $\mathcal{F}$.
Let $\mathcal{P}(x_{\text{new}})$ denote the set of nodes on the selected path, including the chosen search strategy root. We then update every node on the path
\begin{equation}
N(u) \leftarrow N(u)+1, \;
E(u) \leftarrow E(u)+ \mathbb{I}_{\mathrm{fail}}(x_{\text{new}}).
\label{eq:backup}
\end{equation}

\subsection{Tool-Augmented Generation $G_{\mathsf{s}}$}
\label{sec:gen}

In \methodName, $G_{\mathsf{s}}$ is implemented  
as a tool-augmented LLM $g_{\psi}$, 
which is equipped with a set of tools $\mathcal{T}$, such as web retrieval and a Python execution environment. These tools support question construction and also help obtain or verify the ground-truth answer $y^{\star}(x_{\text{new}})$.

\textbf{\textsc{Macro} generator $G_{\textsc{Macro}}$.} 
To expand coverage, $G_{\textsc{Macro}}$ steers generation toward topics that are under-represented among previously explored nodes.
To operationalize ``under-represented,'' we summarize the current search frontier in an embedding space.
Let $\mathrm{emb}:\mathcal{X}\to\mathbb{R}^d$ be an embedding function, and   $z(u)$ be the embedding of $x$ in node $u$.
We periodically cluster the set of embeddings $\{z(u)\}$ into $K$ clusters, where $K$ is a user-specified number (denoted $n$ in our implementation).
For each cluster $k\in\{1,\dots,K\}$ with centroid $\mu_k$, we select a representative node (medoid)
\begin{equation}
u_k \in \arg\min_{u \in \mathcal{C}_k} \|z(u)-\mu_k\|_2,
\end{equation}
and form a representative set $\mathcal{R}=\{u_1,\dots,u_K\}$.
We then provide $\mathcal{R}$ as context to the generator $g_{\psi}$ and instruct it to propose $x_{\text{new}}$ that is \emph{topically distinct} from these representatives, thereby encouraging exploration beyond the currently covered regions.
During generation, tool calls (e.g., retrieval) are allowed to support factual grounding and verification.

\textbf{\textsc{Micro} 
generator $G_{\textsc{Micro}}$}
intensifies search within the local neighborhood of a selected node $u$.
Specifically, given $x$ on node $u$, 
the generator produces $x_{\text{new}}$ via controlled perturbations of $x$ (e.g., entity or attribute substitutions) to stay within the same underlying topic while varying surface realizations.
This local refinement densifies failure evidence around the seed and helps delineate coherent error patterns that correspond to failure modes.
The generator may invoke tools in $\mathcal{T}$ to obtain or verify $y^{\star}(x_{\text{new}})$.

\subsection{Synthesizing Failure Cases into Failure Modes}
\label{sec:cases2modes}
After collecting a set of failure cases $\mathcal{F}_T$, \methodName\ consolidates them into a smaller set of interpretable \emph{failure modes}. 
Recall that a failure mode is a structured subset $\mathcal{M}\subseteq \mathcal{F}_T$ on which the model fails consistently.
We synthesize modes by embedding failure cases into a latent space where similar weaknesses form coherent regions; a mode corresponds to such a region mapped back to a subset of $\mathcal{F}_T$.
Concretely, we compute failure-aware embeddings, identify coherent regions by clustering, and induce concise descriptions of $\mathcal{M}$ with LLM-assisted inductions.

\textbf{\ul{Step I: Failure-aware embeddings.}}
A key requirement for failure-mode synthesis is that grouping should reflect \emph{how} the model fails, not just \emph{what} the input is about.
Embeddings based only on the prompt semantics tend to cluster by topic/domain, which conflates distinct error mechanisms within the same task.
We therefore use \emph{failure-aware} embeddings that incorporate failure signals (e.g., the model output and verifier outcome), so that proximity in embedding space better aligns with shared failure patterns.

Concretely, for each failure case,
we concatenate the problem description with an error description derived from the mismatch between the model output $y(x)$ and the ground-truth answer $y^{\star}(x)$ using an LLM (i.e., \texttt{GPT-5.2}).
An embedding model maps this combined signal to
$z(x) = \mathrm{emb}\!\bigl(x\ \Vert\ \text{error}(x)\bigr) \in \mathbb{R}^d$,
so that similarity in the embedding space reflects both semantic context and failure characteristics\footnote{We use \texttt{text-embedding-3-small} as the embedding model.}.

\textbf{\ul{Step II: Clustering and boundary-aware induction.}}
We group failure embeddings $\{{z(x)\mid x\in\mathcal{F}}\}$ into ${\mathcal{C}_1,\dots,\mathcal{C}_K}$ using \textsc{HDBSCAN} \citep{mcinnes2017hdbscan}, 
which identifies dense regions and marks outliers as noise 
(clustering sensitivity is analyzed in \autoref{sec:experiments}).
Each dense region forms a cluster, which may represent a candidate failure mode, but still include mixed multiple error mechanisms or reflects spurious correlations.
To identify a common failure mode, we must extract a shared weakness pattern and delimit \emph{where} it applies, using nearby verified \emph{non-failures} as contrast.
  

This motivates a \emph{boundary-aware}
induction strategy that explicitly surfaces \emph{where the failures stop}\footnote{We use ``boundary'' in an empirical, diagnostic sense, not as a formal classifier decision boundary.}.
For each cluster $\mathcal{C}_k$, we construct an \emph{information-maximizing} evidence set consisting of (i) \emph{central} failures that capture the dominant error pattern, and (ii) \emph{boundary} failures paired with their nearest verified \emph{non-failures}, which contrast failure and success under minimally different conditions and expose the failure decision boundary.
Concretely, for a cluster $\mathcal{C}_k$ 
we select a small set of $n_c$ ({e.g., 3}) central members $\mathcal{S}^{\mathrm{core}}_k \subset \mathcal{C}_k$ (i.e., closest to the centroid $\mu_k$) to elicit the common error mechanism.
We also identify $b$ boundary members $\mathcal{S}^{\mathrm{bd}}_k \subset \mathcal{C}_k$ (i.e., peripheral points in embedding space). For each boundary failure $x\in\mathcal{S}^{\mathrm{bd}}_k$,  we retrieve a nearest verified non-failure $x^{+}$ from the search tree.
We then prompt an LLM with $\mathcal{S}^{\mathrm{core}}_k$ together with the contrastive pairs $\{(x,x^{+})| x\in\mathcal{S}^{\mathrm{bd}}_k\}$ to induce a concise natural-language description of the underlying failure mode for $\mathcal{C}_k$.

We show the full algorithm of \methodName\ in \autoref{app:details_probellm}, and theoretical analysis in \autoref{app:theoretical}.

\definecolor{HeaderPurple}{HTML}{E9D8FD} 
\definecolor{RowPurple}{HTML}{F6F0FF}    
\definecolor{HeaderBrown}{HTML}{8C564B}  
\definecolor{RowBlue}{HTML}{E8F1FA}      

\begin{table}[t]
\centering
\small
\caption{Main results across 12 target models, covering both proprietary models (light blue) and open-weight models (light purple). 
Proportions
of \textsc{Macro} (\textsc{Ma.}) and \textsc{Micro} (\textsc{Mi.}) simulations,  
and the error rate (\%) are reported.}
\label{tab:main_res}
\setlength{\tabcolsep}{4.5pt}
\renewcommand{\arraystretch}{1.15}
\resizebox{\linewidth}{!}{
\begin{tabular}{l|ccc|ccc|ccc|ccc|ccc}
\toprule[1pt]

\multicolumn{1}{c|}{\cellcolor{white}\multirow{3}{*}{\textbf{Target Model}}} &
\multicolumn{3}{c|}{\textcolor{HeaderBrown}{\texttt{MMLU}}} &
\multicolumn{3}{c|}{\textcolor{HeaderBrown}{\texttt{SuperBLUE}}} &
\multicolumn{3}{c|}{\textcolor{HeaderBrown}{\texttt{MBPP}}} &
\multicolumn{3}{c|}{\textcolor{HeaderBrown}{\texttt{HellaSwag}}} &
\multicolumn{3}{c}{\textcolor{HeaderBrown}{\texttt{TruthfulQA}}} \\

\multicolumn{1}{c|}{} &
\multicolumn{2}{c}{\textbf{Sim. Type}} & {\textbf{Error}} &
\multicolumn{2}{c}{\textbf{Sim. Type}} & {\textbf{Error}} &
\multicolumn{2}{c}{\textbf{Sim. Type}} & {\textbf{Error}} &
\multicolumn{2}{c}{\textbf{Sim. Type}} & {\textbf{Error}} &
\multicolumn{2}{c}{\textbf{Sim. Type}} & {\textbf{Error}} \\
\cmidrule(lr){2-3}
\cmidrule(lr){5-6}
\cmidrule(lr){8-9}
\cmidrule(lr){11-12}
\cmidrule(lr){14-15}

\multicolumn{1}{c|}{} &
\cellcolor{white}\textsc{Ma.} & \cellcolor{white}\textsc{Mi.} & \textbf{Rate$_{(\%)}$} &
\cellcolor{white}\textsc{Ma.} & \cellcolor{white}\textsc{Mi.} & \textbf{Rate$_{(\%)}$} &
\cellcolor{white}\textsc{Ma.} & \cellcolor{white}\textsc{Mi.} & \textbf{Rate$_{(\%)}$} &
\cellcolor{white}\textsc{Ma.} & \cellcolor{white}\textsc{Mi.} & \textbf{Rate$_{(\%)}$} &
\cellcolor{white}\textsc{Ma.} & \cellcolor{white}\textsc{Mi.} & \textbf{Rate$_{(\%)}$} \\
\midrule

\rowcolor{RowBlue}
\texttt{Grok-4.1-fast}       & 34\% & 66\% & 47\% & 50\% & 50\% & 39\% & 35\% & 65\% & 39\% & 69\% & 31\% & 35\% & 45\% & 55\% & 39\% \\

\texttt{Gemini-2.5-flash}    & 63\% & 37\% & 38\% & 45\% & 55\% & 46\% & 53\% & 47\% & 54\% & 76\% & 24\% & 43\% & 56\% & 44\% & 38\% \\
\rowcolor{RowBlue}
\texttt{Claude-3.5-sonnet}   & 36\% & 64\% & 68\% & 53\% & 47\% & 50\% & 42\% & 58\% & 62\% & 76\% & 24\% & 59\% & 76\% & 24\% & 54\% \\

\texttt{GPT-4o-mini}         & 54\% & 46\% & 65\% & 53\% & 47\% & 65\% & 31\% & 69\% & 76\% & 62\% & 38\% & 71\% & 47\% & 53\% & 56\% \\

\midrule

\rowcolor{RowPurple}\texttt{devstral}            & 40\% & 60\% & 66\% & 59\% & 41\% & 44\% & 27\% & 73\% & 50\% & 61\% & 39\% & 45\% & 66\% & 34\% & 47\% \\
\texttt{Llama-3.1-8b-ins.}   & 59\% & 41\% & 86\% & 46\% & 54\% & 71\% & 40\% & 60\% & 90\% & 62\% & 38\% & 75\% & 62\% & 38\% & 78\% \\
\rowcolor{RowPurple}\texttt{phi-4}               & 64\% & 36\% & 70\% & 58\% & 42\% & 59\% & 39\% & 61\% & 78\% & 75\% & 25\% & 52\% & 53\% & 47\% & 69\% \\
\texttt{Deepseek-v3.2}       & 51\% & 49\% & 36\% & 34\% & 66\% & 43\% & 65\% & 35\% & 46\% & 71\% & 29\% & 41\% & 57\% & 43\% & 43\% \\
\rowcolor{RowPurple}\texttt{ministral-14b}       & 37\% & 63\% & 70\% & 48\% & 52\% & 65\% & 42\% & 58\% & 64\% & 82\% & 18\% & 62\% & 47\% & 53\% & 64\% \\
\texttt{olmo-3-7b-ins.}      & 62\% & 38\% & 73\% & 58\% & 42\% & 69\% & 59\% & 41\% & 76\% & 73\% & 27\% & 64\% & 81\% & 19\% & 64\% \\
\rowcolor{RowPurple}\texttt{granite-4.0}         & 51\% & 49\% & 72\% & 61\% & 39\% & 75\% & 50\% & 50\% & 83\% & 75\% & 25\% & 68\% & 58\% & 42\% & 70\% \\
\texttt{GPT-oss-20b}         & 73\% & 27\% & 36\% & 48\% & 52\% & 31\% & 68\% &	32\% &	32\% & 56\% &	44\% &	46\% & 45\% &	55\% &	40\%  \\

\bottomrule[1pt]
\end{tabular}
} 
\end{table}

\section{Implementation Details}

To improve clarity and reproducibility without overloading the draft, we defer detailed implementation choices to the codebase and documents. This includes the prompt templates used for probing, as well as concrete hyperparameter settings and other engineering-level configurations.

\section{Experiments}
\label{sec:experiments}

\subsection{Experimental Setup}

\begin{wraptable}{r}{0.5\linewidth}
\centering
\small
\caption{Cluster statistics between existing benchmarks and \methodName. Avg. size means the average cluster size.}
\label{tab:cluster_compare}
\setlength{\tabcolsep}{4pt}
\renewcommand{\arraystretch}{1.15}
\resizebox{\linewidth}{!}{
\begin{tabular}{l|cc|cc}
\toprule
\multirow{2}{*}{\textbf{Target Model}} &
\multicolumn{2}{c|}{\textbf{Existing Bench}} &
\multicolumn{2}{c}{\textbf{\methodName}} \\
\cmidrule(lr){2-3} \cmidrule(lr){4-5}
& \textbf{\#Cluster} & \textbf{Avg. Size} &
  \textbf{\#Cluster} & \textbf{Avg. Size} \\
\midrule
\rowcolor{RowPurple}
\texttt{Deepseek-v3.2}        & 4 & 30.25 & 10 & 10.80 \\
\texttt{Llama-3.1-8b-ins.}    & 4 & 60.25 & 19 & 12.58 \\
\rowcolor{RowPurple}
\texttt{Claude-3.5-sonnet}    & 2 & 114.00 & 11 & 13.55 \\
\texttt{Ministral-14b}        & 6 & 25.00 & 18 & 10.61 \\
\bottomrule
\end{tabular}
}
\end{wraptable}

\textbf{Model \& Benchmark Selection.} We evaluate 12 target models covering open-weight and proprietary ones (more detailed in \autoref{app:exp_details}). We use three generators to answer research questions (\texttt{GPT-5.2} for RQ1 and RQ2, \texttt{Llama-3.1-70B-Ins., claude-4.5-sonnet} for RQ2). 
For the initial benchmarks, we select five benchmarks with different evaluation aspects: \texttt{MMLU} \citep{Hendrycks2020MeasuringMM}, \texttt{SuperGLUE} \citep{Wang2019SuperGLUEAS}, \texttt{MBPP} \citep{austin2021program}, \texttt{HellaSwag} \citep{Zellers2019HellaSwagCA}, and \texttt{TruthfulQA}. More details are presented in \autoref{app:exp_details}.

\textbf{Evaluation Protocols \& Metrics.}
We employ complementary evaluation protocols to assess both \emph{failure exposure} and \emph{failure-mode discovery}. \textbf{(a)}~For the direct evaluation of \methodName, we report the \textbf{error rate}, defined as the percentage of generated questions that are answered incorrectly, to measure efficiency under a fixed budget. Beyond aggregate failure frequency, we further assess the \emph{structure and novelty} of the discovered failure modes.
Specifically, we report the \textbf{noise rate} and the \textbf{cluster-size standard deviation} to evaluate the stability and compactness of failure-mode clusters. Moreover, to directly measure \emph{novel failure-mode discovery}, we adopt a \textbf{cluster-overlap analysis} (RQ4) that compares clusters induced from \methodName\ with those derived from existing static benchmarks, where lower overlap indicates higher mode-level novelty.
Formal definitions of all metrics are provided in \autoref{app:exp_details}.
\textbf{(b)}~For human evaluation, we go beyond aggregate error rates and explicitly audit
the quality of synthesized test cases and ground-truth answers,
the reliability of the LLM-based verifier,
and the coherence and diagnostic usefulness of the induced failure modes.
Detailed protocols and metrics are provided in \autoref{app:appendix_human_eval}. \textbf{(c)}~For automatic verification, we use an LLM-as-a-Judge (GPT-5.2) \citep{zheng2023judging}
to compare model responses against verified ground-truth answers.





\subsection{Main Results}

\begin{wrapfigure}{r}{0.46\linewidth}
    \centering
    \includegraphics[width=\linewidth]{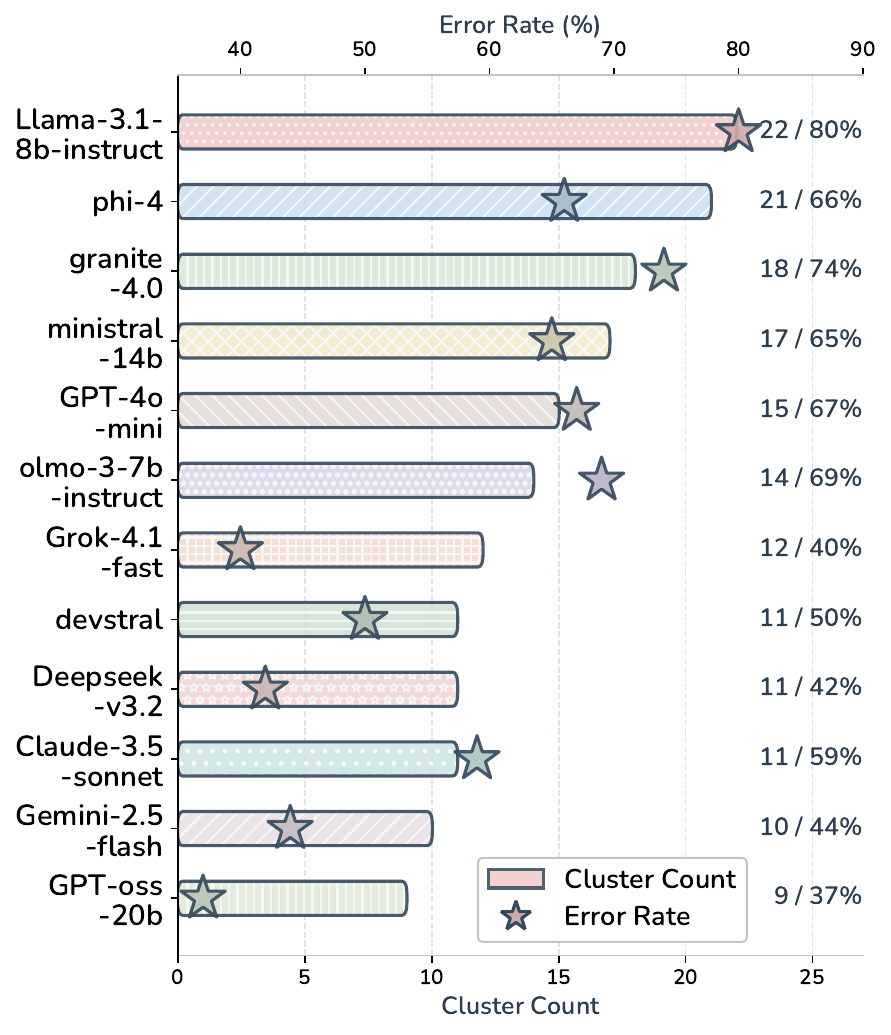}
    \caption{Cluster count and average error rate of different target models (with Pearson $r=0.864$).}
    \label{fig:cluster_num}
\end{wrapfigure}

\textbf{\rqtag{rq1}{1}~How effective is \methodName\ at exposing failures across different seed benchmarks and target LLMs?}

From \autoref{tab:main_res}, we observe that \methodName\ consistently exposes a high fraction of model errors across all five datasets, demonstrating stable failure-discovery capability across tasks and domains. Notably, high error rates are observed for both proprietary and open-weight models, suggesting that the discovered failures reflect intrinsic model weaknesses rather than model accessibility. In addition, the proportions of \textsc{Macro} and \textsc{Micro} simulations are generally balanced across datasets and models, indicating that the search process is not biased toward a specific simulation type. As shown in \autoref{fig:cluster_num}, the number of discovered failure clusters is strongly correlated with the average error rate (Pearson $r = 0.864$), implying that models exhibiting higher error rates tend to have more diverse failure modes. This suggests that cluster count serves as a meaningful signal of failure diversity rather than an artifact of the discovery process.

\begin{figure}[t]
    \centering
    \includegraphics[width=0.95\linewidth]{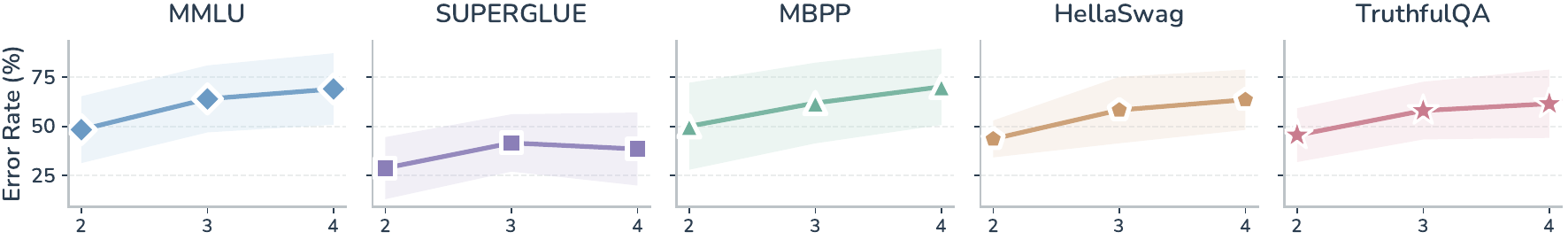}
    \caption{Error rates at different search depths across five datasets, with standard deviations computed over results from 12 target models.}
    \label{fig:depth_analysis}
\end{figure}




\begin{figure}
    \centering
    \includegraphics[width=0.5\linewidth]{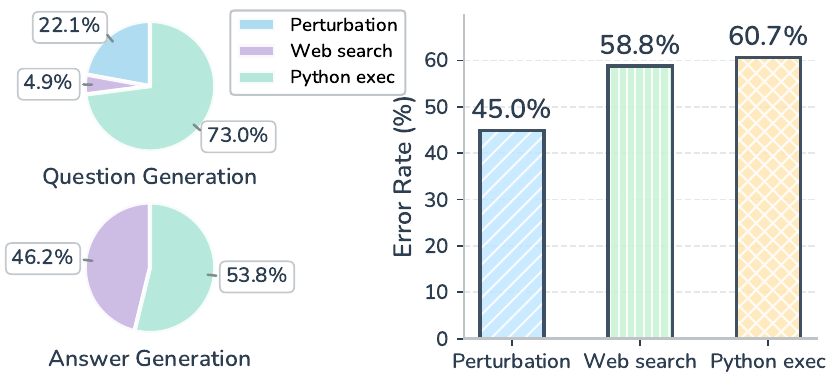}
    \caption{Tool usage distribution for question generation (top pie) and answer generation (bottom pie), along with error rates across tools (right bar chart).}
    \label{fig:tool_analysis}
\end{figure}

\textbf{\rqtag{rq2}{2}~How do search depth and tool usage in MCTS affect \methodName's effectiveness?}

To assess how MCTS design affects \methodName, we first study whether increasing search depth improves failure discovery.
As shown in \autoref{fig:depth_analysis}, across 12 target models,  increasing the search depth improves the discovered failure rate on four of the five datasets; on \texttt{SuperGLUE}, the slight drop at depth 4 likely reflects saturation, where additional search yields redundant or less informative cases. 
When tools are used in the expansion of MCTS,
\autoref{fig:tool_analysis} shows that testcase generation is dominated by Python execution, while answer generation uses a more balanced mix of Python execution and web search. Despite differing usage frequencies, tool-augmented interactions expose failures more effectively than perturbation, suggesting that deeper reasoning and external tool grounding better surface model weaknesses. We also analysis the impact of different generators in Appendix \ref{app:diff_generator}. We further analyze the relationship between the \textsc{Macro}/\textsc{Micro} simulation ratio (i.e., $\beta$ in Eq 7) and the error rate, and observe no monotonic or fixed correlation, indicating that \methodName’s effectiveness does not rely on a specific allocation but on adaptive hierarchical search (see Appendix~\ref{app:search_type} for details).



\textbf{\rqtag{rq3}{3}~
How valid are \methodName’s synthesized test cases, automatic verification, and induced failure modes according to human judgments?}

We report the human evaluation results in Appendix~\ref{app:human_eval}, which confirm  
the quality of synthesized test cases, the reliability of automatic verification, and the coherence and diagnostic usefulness of the induced failure modes.


\begin{figure}[t]
    \centering
    \setlength{\abovecaptionskip}{2pt}
    \includegraphics[width=1\linewidth]{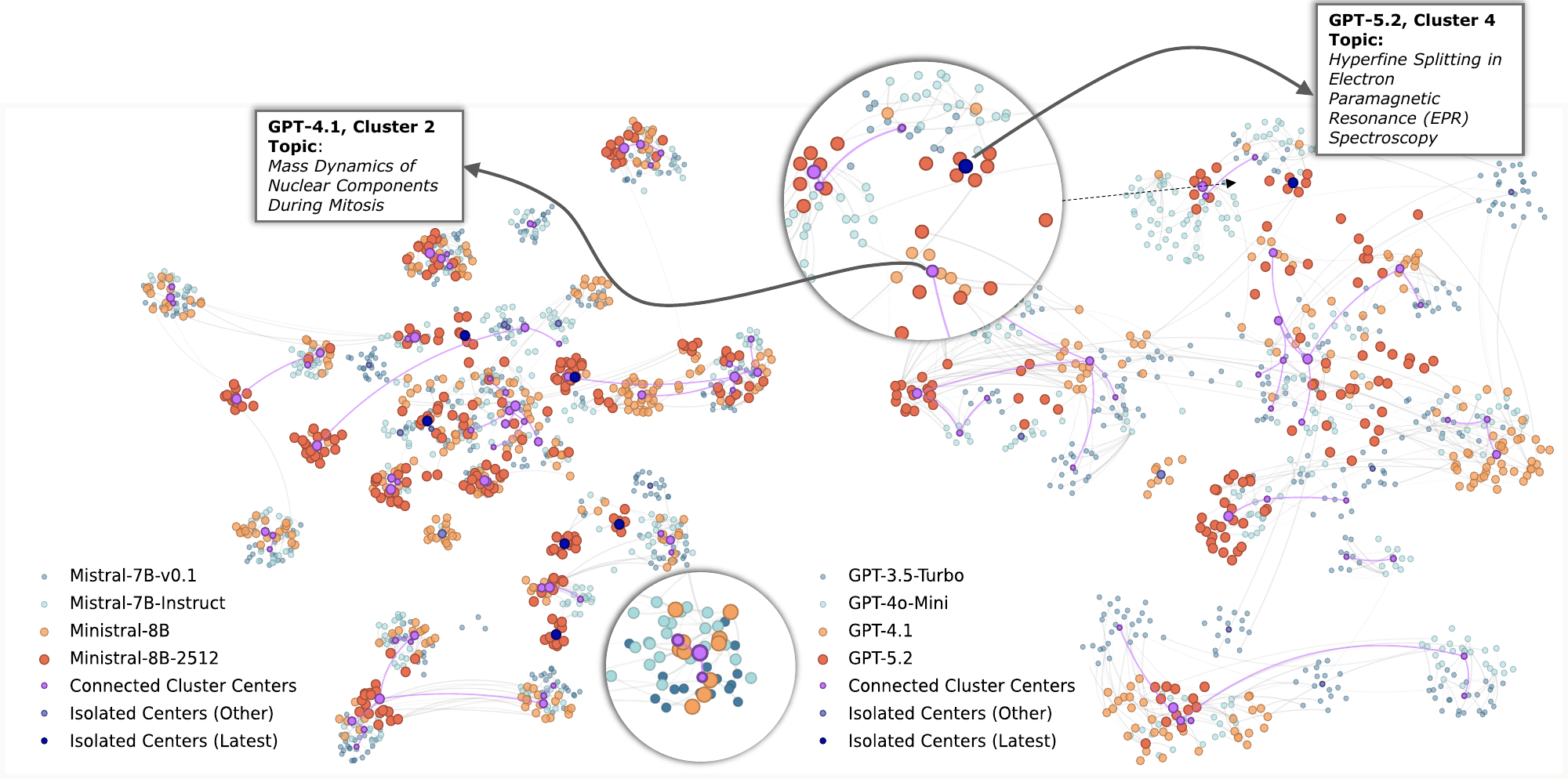}
    \caption{UMAP visualization of failure samples identified by \methodName\ for Mistral (left) and GPT (right) families. Small nodes represent individual failed queries, colored by version (Cool: Older $\to$ Warm: Newer). Directed edges connect semantically similar failures across generations, tracing the evolution of model deficits. Large purple centroids indicate weakness clusters persisting across versions, while dark blue centroids are isolated. Insets highlight dense failure concentrations in specific scientific domains (e.g., Nuclear Dynamics), illustrating how error patterns morph structurally even in advanced models.}
    \label{fig:clusters}
    \vspace{-15pt}
\end{figure}

\begin{figure}
    \centering
    \includegraphics[width=0.55\linewidth]{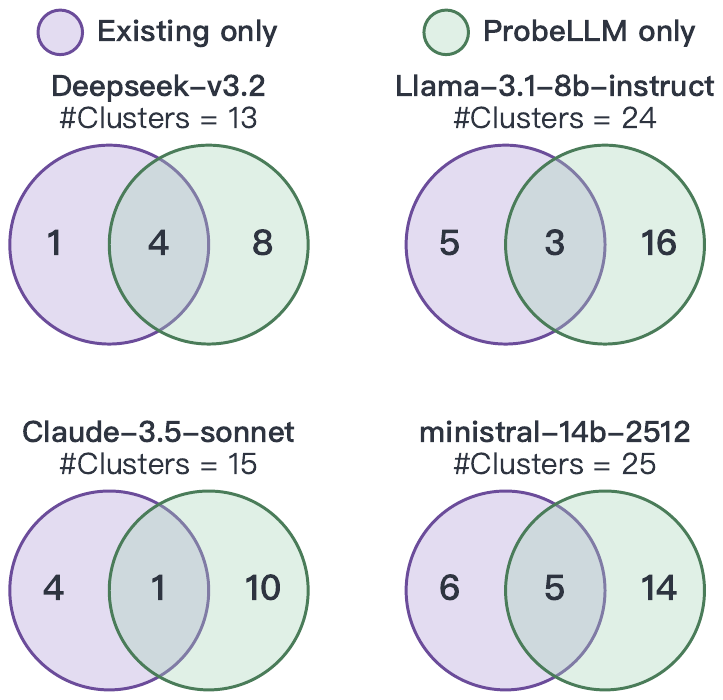}
    \caption{Failure modes discovered by existing benchmarks and \methodName. Overlapping regions indicate failure modes that contain failure cases identified by both existing benchmarks and \methodName.}
    \label{fig:venn}
\end{figure}

\textbf{\rqtag{rq4}{4}~How is \methodName\ better than   static benchmarks for coverage and novel failure-mode discovery?}

\autoref{tab:cluster_compare} and \autoref{fig:venn} compare \methodName\ with existing static benchmarks in terms of coverage and the discovery of failure modes. 
Across all target models, \methodName\ consistently identifies substantially more failure clusters while yielding much smaller average cluster sizes, indicating finer-grained and more diverse failure modes rather than coarse aggregation. 
To fairly assess overlap, we construct a balanced setting by mixing an equal number of failure cases discovered by \methodName\ and those from existing benchmarks, and then perform clustering on the combined set. 
The resulting Venn diagrams show that only a small fraction of clusters overlap, while a large number of clusters are discovered exclusively by \methodName. 
These results demonstrate that \methodName\ not only covers most failure modes identified by static benchmarks, but also uncovers a substantial set of previously unseen failure modes, significantly expanding evaluation coverage.


\textbf{\rqtag{rq5}{5}~How does \methodName\ perform relative to prior automated probing and evaluation methods?}

\begin{wraptable}{l}{0.52\linewidth}
\centering
\small
\caption{Baseline comparison (error rate (\%)) across different datasets (with \texttt{GPT-5.2} as the generator and \texttt{Llama-3.1-8b-ins.} as the target model). We also report the final attack success rate of PAIR in the subscripts.}
\label{tab:autodetect-pair-probellm}
\renewcommand{\arraystretch}{1}
\resizebox{\linewidth}{!}{
\begin{tabular}{lccc}
\toprule
\rowcolor{HeaderPurple}
\textbf{Dataset} & \textbf{AutoDetect} & \textbf{PAIR} & \textbf{\methodName} \\
\midrule
\texttt{Hella.}  & \ra{45\%} & \ra{35\%\textsubscript{~99.06\%}} & \ra{\textbf{75\%}} \\
\texttt{MBPP}    & \rb{46\%} & \rb{38\%\textsubscript{~94.85\%}} & \rb{\textbf{90\%}} \\
\texttt{MMLU}    & \ra{67\%} & \ra{46\%\textsubscript{~94.87\%}} & \ra{\textbf{86\%}} \\
\texttt{Super.}  & \rb{38\%} & \rb{44\%\textsubscript{~89.50\%}} & \rb{\textbf{71\%}} \\
\texttt{Truth.}  & \ra{56\%} & \ra{40\%\textsubscript{~93.86\%}} & \ra{\textbf{78\%}} \\
\bottomrule
\end{tabular}
}
\vspace{-10pt}
\end{wraptable}

We also compare \methodName\ with two representative prior baselines: AutoDetect \citep{cheng2024autodetect} and PAIR \citep{chao2025jailbreaking} (the implementation details are shown in \autoref{app:baseline_details}). \autoref{tab:comaprison_previous_work} shows that \methodName\ consistently outperforms AutoDetect and PAIR across all five datasets, achieving substantially higher error rates under the same generator–target setting. Notably, although PAIR attains high final attack success rates, its induced error rates remain much lower, indicating that high attack success does not necessarily imply stronger failure discovery.

\begin{wraptable}{r}{0.48\linewidth}
\vspace{-10pt}
\centering
\small
\caption{Noise rate (\textbf{NR}) and cluster-size standard deviation (\textbf{Std$_{\text{CS}}$}) across methods and models.}
\label{tab:noise_cluster_var}
\renewcommand{\arraystretch}{1}
\setlength{\tabcolsep}{4pt}
\resizebox{\linewidth}{!}{
\begin{tabular}{l l cc}
\toprule
\rowcolor{HeaderPurple}
\textbf{Method} & \textbf{Baseline} & \textbf{NR $\downarrow$} & \textbf{Std$_{\text{CS}}$ $\downarrow$} \\
\midrule
\rowcolor{white}
\cellcolor{white}\textbf{AutoDetect} & \texttt{llama-3.1-8b-ins.} & 60.9 & 13.2 \\
\rowcolor{RowPurple}
\cellcolor{white}\textbf{PAIR} & \texttt{llama-3.1-8b-ins.} & 77.2 & 22.9 \\
\midrule
\rowcolor{white}
\cellcolor{white}{} & \texttt{Claude-3.5-sonnet} & 50.5 & 9.2 \\
\rowcolor{RowPurple}
\cellcolor{white} & \texttt{GPT-4o-mini} & 45.6 & 8.8 \\
\rowcolor{white}
\cellcolor{white}{\textbf{\methodName}} & \texttt{llama-3.1-8b-ins.} & 37.0 & 4.2 \\
\rowcolor{RowPurple}
\cellcolor{white} & \texttt{ministral-14b-2512} & 37.5 & 3.2 \\
\rowcolor{white}
\cellcolor{white} & \texttt{Grok-4.1-fast} & 31.7 & 4.3 \\
\bottomrule
\end{tabular}
}
\end{wraptable}

Moreover, as shown in \autoref{tab:noise_cluster_var}, both baselines exhibit high noise rates and large cluster-size variability, indicating spurious discoveries (AutoDetect) or poorly structured failure modes (PAIR). In contrast, \methodName\ consistently achieves lower noise rates and reduced cluster-size standard deviation across target models, suggesting cleaner discoveries and more balanced failure-mode structures, even on stronger models. We also analysis the quality of generated cases (questions) by human evaluation in Appendix~\ref{app:baseline_compare}.


\begin{figure}[t]
    \centering
    \includegraphics[width=0.95\linewidth]{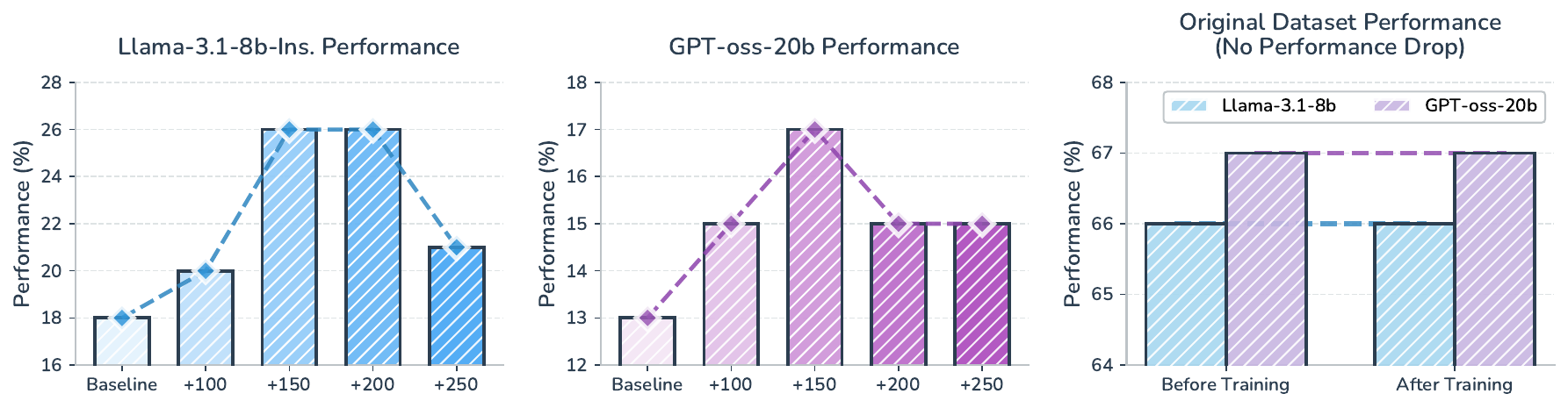}
    \caption{Performance of error-focused training with \methodName-discovered data under different training sizes. Left and middle panels report results on error-prone evaluation sets across five datasets, and the right panel reports performance on held-out subsets of existing benchmarks.}
    \label{fig:mitigation}
\end{figure}

\textbf{\rqtag{rq6}{6}~Can the failures discovered by \methodName\ be leveraged to improve model performance through mitigation or fine-tuning?}

As shown in \autoref{fig:mitigation}, across both models, error-focused training consistently improves performance on the error-prone evaluation set, with gains increasing up to moderate training sizes (150–200 samples) and diminishing thereafter. The slight performance drop observed at larger scales is potentially attributed to over-concentration on narrow failure distributions, where redundant failure cases bias learning toward local patterns. Importantly, performance on the general evaluation set remains stable across all settings, indicating that the proposed training strategy enables targeted failure repair without degrading general benchmark performance (detailed experiment setting is shown in Appendix \ref{app:mitigation}).

\newcommand{\cg}[1]{\cellcolor{RowGray}{#1}}
\newcommand{\cp}[1]{\cellcolor{RowPurple}{#1}}
\definecolor{HeaderPurple}{HTML}{E9D8FD}
\definecolor{RowGray}{HTML}{FAFAFC}
\definecolor{RowPurple}{HTML}{F6F0FF}

\textbf{\rqtag{rq7}{7}~What is the computational and token cost of \methodName, and how does its cost-effectiveness compare across settings?}


In terms of cost efficiency, \autoref{tab:avg_cost_breakdown} indicates that employing \texttt{GPT-5.2} as the generator incurs consistently low computational costs across all five benchmarks. For every dataset, the total token usage stays below 6K, and the highest cost required to produce a high–error-rate test case is less than \$0.10. These results suggest that \methodName\ enables effective failure discovery while maintaining minimal monetary overhead, highlighting its practicality and cost-effectiveness.

\begin{wraptable}{r}{0.54\linewidth}
\centering
\small
\setlength{\tabcolsep}{3pt}
\renewcommand{\arraystretch}{1.3}
\caption{Average cost breakdown by datasets (\textbf{Bench.}) and search type (\textbf{Type}: \textsc{Ma.}/\textsc{Mi.}) across pipeline components, with \texttt{GPT-5.2} as the generator. \textbf{Model$_{\text{In/Out}}$} is the model input/output cost; \textbf{Judge.} denotes the cost of judging/verification; \textbf{Tool$_{\text{Q}}$} and \textbf{Tool$_{\text{A}}$} are tool costs incurred during question generation and answer-stage processing. Superscripts in \textbf{Total} report the \ul{max} corresponding average dollar cost (USD).}
\label{tab:avg_cost_breakdown}
\resizebox{\linewidth}{!}{
\begin{tabular}{ccccccc}
\toprule
\rowcolor{HeaderPurple}
\textbf{Benchmark} & \textbf{Type} &
\textbf{Model$_{\text{In/Out}}$} & \textbf{Judge.} & \textbf{Tool$_{\text{Q}}$} & \textbf{Tool$_{\text{A}}$} &
\textbf{Total} \\
\midrule
\multirow{2}{*}{\texttt{HellaSwag}} & \cg{\textsc{Ma.}} & \cg{1210.04} & \cg{1430.55} & \cg{1666.34} & \cg{553.18} & \cg{4860.11\textsuperscript{\$0.083}} \\
                                & \cg{\textsc{Mi.}} & \cg{ 985.64} & \cg{ 850.46} & \cg{ 600.04} & \cg{415.63} & \cg{2851.77\textsuperscript{\$0.048}} \\
\midrule
\multirow{2}{*}{\texttt{MBPP}}   & \cp{\textsc{Ma.}} & \cp{1334.09} & \cp{1480.21} & \cp{1726.27} & \cp{518.32} & \cp{5058.89\textsuperscript{\$0.086}} \\
                                & \cp{\textsc{Mi.}} & \cp{2190.26} & \cp{1642.30} & \cp{ 525.60} & \cp{470.23} & \cp{4828.39\textsuperscript{\$0.082}} \\
\midrule
\multirow{2}{*}{\texttt{MMLU}}   & \cg{\textsc{Ma.}} & \cg{1276.75} & \cg{1415.03} & \cg{1572.67} & \cg{546.84} & \cg{4811.29\textsuperscript{\$0.082}} \\
                                & \cg{\textsc{Mi.}} & \cg{2963.53} & \cg{1608.42} & \cg{ 538.37} & \cg{582.68} & \cg{5693.00\textsuperscript{\$0.097}} \\
\midrule
\multirow{2}{*}{\texttt{SuperGLUE}} & \cp{\textsc{Ma.}} & \cp{1184.78} & \cp{1359.38} & \cp{1510.34} & \cp{556.26} & \cp{4610.76\textsuperscript{\$0.078}} \\
                                & \cp{\textsc{Mi.}} & \cp{1261.92} & \cp{ 805.27} & \cp{ 554.88} & \cp{365.59} & \cp{2987.66\textsuperscript{\$0.051}} \\
\midrule
\multirow{2}{*}{\texttt{TruthfulQA}} & \cg{\textsc{Ma.}} & \cg{1156.43} & \cg{1442.13} & \cg{1667.47} & \cg{580.47} & \cg{4846.50\textsuperscript{\$0.082}} \\
                                & \cg{\textsc{Mi.}} & \cg{1462.49} & \cg{ 946.24} & \cg{ 435.76} & \cg{380.31} & \cg{3224.80\textsuperscript{\$0.055}} \\
\bottomrule
\end{tabular}}
\end{wraptable}

\textbf{\rqtag{rq8}{8} How do failure modes evolve across Mistral and GPT model generations?  }

\autoref{fig:clusters} visualizes the evolution of failure modes for \texttt{Mistral} and \texttt{GPT} families, revealing trends of \textit{convergence} and \textit{specialization}. As models evolve, the failure landscape shifts from diffuse distributions to fewer, tighter clusters. Connected centroids trace how persistent semantic weaknesses are inherited and morphed across versions. In contrast, isolated centroids in advanced models highlight the retreat of deficits into niche domains. For example, the isolated cluster for ``Hyperfine Splitting'' illustrates this shift toward extreme specialization. Detailed analysis is provided in Appendix~\ref{sec:Model-Evolution}.

\textbf{\rqtag{rq9}{9} How sensitive is the clustering process in \methodName\ to hyperparameter choices?}

We observe consistently high cluster stability across a wide range of clustering settings, 
indicating that the discovered failure modes are robust
(see Appendix~\ref{app:cluster_sensitivity} for details).



\section{Conclusion}

We presented \methodName, which advances weakness discovery from isolated cases to structured failure modes via hierarchical Monte Carlo Tree Search. Grounded in verifiable evidence, \methodName\ reveals substantially broader failure landscapes than static benchmarks, underscoring the necessity of dynamic, mode-centric evaluation for the reliable characterization of evolving models.

\section*{Impact Statement}
This work develops an automated framework for adaptively probing large language models to discover recurring, systematic failure modes and induce them for debugging and monitoring. The primary positive impact is improving reliability, transparency, and accountability of LLM-based systems by moving beyond static benchmarks toward systematic, interpretable characterization of weaknesses that can guide mitigation and safer deployment. However, the same capability is dual-use: automated probing and tool-assisted test generation can be misused to more efficiently identify exploitable behaviors or craft adversarial prompts, and evidence gathered from external sources may introduce privacy, provenance, or bias concerns if raw traces are shared. To reduce these risks, we recommend restricting use to authorized evaluation settings, avoiding release of high-impact exploit prompts and sensitive retrieval traces (prefer aggregated failure-mode summaries), applying redaction and source controls, and using targeted human review to validate correctness and limit harmful content exposure. Finally, adaptive probing can increase compute cost; responsible deployment should emphasize efficiency, bounded budgets, and reuse of discovered failure modes to limit environmental footprint.

\section*{Disclosure of Generative AI Usage}
We used large language models to assist with writing clarity, grammar, and experimental code development, and used a text-to-image model for non-substantive visual elements. All final wording, technical claims, code, results, figures’ scientific content, and conclusions were reviewed, validated, and produced by the authors.

\bibliographystyle{plainnat}
\bibliography{example_paper}

\newpage
\appendix

\section{Related Work}

\textbf{Trustworthy LLMs.} While large language models (LLMs) demonstrate remarkable capabilities, they also introduce a broad range of potential risks and threat vectors \citep{huang2025trustworthiness, huang2024position}, including jailbreak attacks \citep{yi2024jailbreak}, privacy leakage \citep{kim2023propile}, ethical violations \citep{jiao2025navigating}, and hallucinated responses \citep{huang2025survey}. 
\citet{zou2023universal} propose the GCG method, which achieves universal and transferable jailbreak attacks through prompt prefix optimization. AutoDAN \citep{liuautodan} further introduces an automated framework for generating stealthy and semantically meaningful jailbreak prompts that can bypass alignment safeguards in LLMs. 
Beyond security vulnerabilities, several studies reveal systemic biases in LLMs across diverse downstream applications, as demonstrated by \citet{dai2024bias}, \citet{yeh2023evaluating}, and \citet{ye2024justice}. 
In multilingual settings, \citet{HuangFLW00024} identify inconsistencies and hallucinations in LLM outputs and propose a language aggregation strategy to alleviate these issues. 
Additionally, \citet{berglundreversal} uncover the ``reversal curse,'' highlighting a fundamental limitation in LLM generalization. 
Prior work also shows that language models continue to struggle with moral reasoning and ethical dilemmas \citep{scherrer2023evaluating, forbes2020social}. 
Moreover, recent studies have begun to expose emerging risks associated with LLM-based agents, including safety, controllability, and alignment concerns \citep{zhang2024agent, wang2025comprehensive, huang2025building}.

\begin{table}[h]
\centering
\small
\caption{
Comparison with previous works across different dimensions.
\textbf{Failure Mode Discovery} indicates whether a framework is capable of identifying structured failure modes (i.e., systematic patterns of model weaknesses), rather than merely collecting individual error or failure cases.
\textbf{Data Quality Control} refers to the extent to which the generated test cases are ensured to be truthful, valid, and semantically consistent with the intended evaluation objectives.
\textbf{Human Intervention} measures whether the evaluation process requires explicit manual intervention or curation, with methods requiring no human intervention marked higher.
\textbf{Benchmark-Agnostic} denotes whether the evaluation is independent of fixed, publicly available benchmarks.
\textbf{Failure Diversity} captures whether the framework explicitly enables coverage or analysis over diverse failure modes or regions of the input space.
}
\renewcommand{\arraystretch}{1.2}
\scalebox{0.9}{
\begin{tabular}{lccccc}
\toprule[1pt]
\textbf{Framework} &
\makecell{\textbf{Failure Mode}\\\textbf{Discovery}} &
\makecell{\textbf{Data Quality}\\\textbf{Control}} &
\makecell{\textbf{\textit{w/o} Human}\\\textbf{Intervention}} &
\makecell{\textbf{Benchmark}\\\textbf{-Agnostic}} &
\makecell{\textbf{Failure}\\\textbf{Diversity}} \\
\hline
\textbf{ACE \citep{afkanpour2025automated}}
& \faAdjust
& \faCheck
& \faTimes
& \faCheck
& \faAdjust \\

\textbf{AutoDetect \citep{cheng2024autodetect}}
& \faCheck
& \faAdjust
& \faAdjust
& \faCheck
& \faAdjust \\

\textbf{AutoBencher \citep{li2025autobencher}}
& \faAdjust
& \faCheck
& \faCheck
& \faTimes
& \faAdjust \\

\textbf{Adaptive Dis. \citep{wang2025adaptive}} 
& \faTimes
& \faAdjust
& \faCheck
& \faCheck
& \faTimes \\
\textbf{FACT‐AUDIT \citep{lin-etal-2025-fact}}
& \faAdjust  
& \faCheck    
& \faAdjust  
& \faCheck    
& \faCheck    \\
\textbf{APT \citep{rao2025apt}}
& \faAdjust   
& \faAdjust   
& \faCheck    
& \faCheck   
& \faAdjust   \\
\midrule
\textbf{\methodName\ (Ours)}
& \faCheck
& \faCheck
& \faCheck
& \faCheck
& \faCheck \\
\bottomrule[1pt]
\end{tabular}}

\vspace{2mm}
\footnotesize
\textbf{Legend:} \faCheck~high/yes \quad \faAdjust~partial/medium \quad \faTimes~low/no
\label{tab:comaprison_previous_work}
\end{table}

\textbf{Probing for Weaknesses of LLMs.} A growing body of work focuses on systematically probing and characterizing the vulnerabilities of large language models. \citet{cheng2024autodetect} propose AutoDetect, an automated framework that discovers hidden weaknesses in LLMs and leverages them to improve model robustness and overall performance. Followed by that, \citet{li2025autobencher} and \citet{bao2024autobench} propose the frameworks for automatic benchmarking of language-related capabilities for generative models. ACE is an automated, active-learning framework for fine-grained and efficient evaluation of foundation model capabilities beyond static benchmarks \citep{afkanpour2025automated}. SEAL \citep{zweiger2025self} enables large language models to self-adapt their parameters to new tasks and knowledge. \citet{qiu2023latent} introduce a benchmark for evaluating text safety and output robustness by assessing models’ susceptibility to latent jailbreaks induced by subtle prompt perturbations. In the domain of mathematical reasoning, \citet{hou2025automatic} develop an automatic stress-testing framework that generates semantically equivalent yet more challenging problem variants to expose robustness failures. \citet{wangadaptive} investigate contextual robustness by using automated tree search to generate adaptive distracting contexts, demonstrating that LLM reasoning can degrade significantly under irrelevant but semantically related information. Extending robustness analysis to multilingual settings, \citet{xu2025cross} present an automatic probing approach that uncovers cross-lingual weaknesses in multilingual LLMs, revealing notable performance drops across languages. Beyond static evaluations, \citet{liu2025chasing} propose an online self-play reinforcement learning framework in which attacker and defender policies co-evolve, enabling models to adapt to dynamically evolving safety threats. Similarly, \citet{rahman2025x} introduce X-teaming, a multi-agent framework that models multi-turn jailbreak attacks and defenses as adaptive interactions, uncovering complex failure modes that are not captured by single-turn evaluations. Moreover, some recent studies also propose different ways for automatic jailbreaking or red-teaming \citep{perez2022red, xu2024redagent, mazeika2024harmbench, paulusadvprompter, chao2025jailbreaking, ge2024mart, wang2025adaptive}.

\textbf{MCTS in LLMs.}
Recent work has explored Monte Carlo Tree Search (MCTS) as a mechanism to enhance LLMs, primarily for search-guided reasoning and planning. Methods such as MCTS-based planners apply tree search over intermediate reasoning steps to improve solution quality for individual inputs, often guided by prompted or learned value functions \citep{xie2024monte, feng2023alphazero, zhang2024accessing, koh2024tree, hu2025mcts, pouplin2024retrieval}. A related line of work adopts AlphaZero-style frameworks, using MCTS as a policy improvement operator to collect high-quality trajectories for self-training or reinforcement learning \citep{zhang2024rest, feng2023alphazero, wei2025unifying}. Tree search has also been successfully applied in domain-specific settings such as formal theorem proving, where external verifiers provide sparse but reliable feedback \citep{xin2024deepseek}. In contrast to these approaches, which are largely solution-centric and aim to improve model accuracy, our work repurposes MCTS for adaptive evaluation, treating probing as a budgeted exploration problem over the failure space and using search to systematically discover recurring failure modes rather than optimal reasoning trajectories.

\textbf{Active Learning in LLMs.}
Recent work has increasingly incorporated LLMs into active learning frameworks to reduce annotation cost and improve data efficiency. A common paradigm treats LLMs as supervision oracles, where informative samples selected by uncertainty or diversity criteria are labeled by LLMs instead of humans, covering applications such as cross-task text classification, generalized category discovery, and process reward model training \citep{zhang2025applying,astorga2024active,duan2025efficient,liang2024actively}. Beyond label acquisition, LLMs have also been used to guide data generation or augmentation within active learning loops, enabling the synthesis of challenging or underrepresented samples based on model feedback, including evolving knowledge distillation and safety-critical data construction \citep{liu2024evolving,hassan2024active}. Other studies explore LLM-driven querying strategies to alleviate cold-start or few-shot limitations, exemplified by ActiveLLM \citep{bayer2026activellm}, while recent work further extends active learning to cost-aware and partially observed settings using LLMs as generative surrogates for uncertainty estimation \citep{astorga2024active}. Comprehensive surveys systematize these directions by categorizing LLM-based active learning methods according to their roles in selection, generation, and annotation \citep{xia2025selection}. In contrast to prior work that primarily applies active learning to improve downstream model performance or reduce labeling cost, our work adopts active learning as a mechanism to iteratively select informative interaction instances based on model feedback, enabling systematic exploration and analysis of LLM behaviors rather than model training.

\textbf{Comparison With Previous Work.}
We compare \methodName\ with representative automated evaluation frameworks along several complementary dimensions, as summarized in \autoref{tab:comaprison_previous_work}.
In particular, we focus on whether a method can identify structured failure modes rather than isolated error cases, ensure the truthfulness of generated test cases, operate without explicit human intervention, remain independent of fixed benchmarks, and provide coverage over diverse failure modes.

ACE \citep{afkanpour2025automated} emphasizes capability-level decomposition and efficient evaluation through active learning, enabling broad benchmark-agnostic assessment.
However, its evaluation primarily reflects capability gaps rather than explicitly characterizing structured failure modes, and its diversity emerges implicitly from capability coverage.
AutoDetect \citep{cheng2024autodetect} is designed to surface model failures through heuristic- or pattern-driven procedures, but it mainly operates at the level of individual failure cases and does not explicitly model or analyze higher-level failure modes or their diversity.
AutoBencher \citep{li2025autobencher} focuses on benchmark construction and curation, offering stronger control over test data quality, but it typically requires substantial human intervention and remains tied to static benchmark settings.
Moreover, most of these works do not introduce explicit data quality control when synthesizing test cases (e.g., directly using LLMs for question generation).
\citet{wang2025adaptive} propose an automated method for generating distractor questions from original benchmark items to evaluate model robustness, but the generated cases remain closely coupled to existing benchmarks and do not induce structured failure modes.

More recent work such as FACT-AUDIT \citep{lin-etal-2025-fact} adopts an adaptive, multi-agent framework to dynamically audit factual reasoning failures in LLMs.
While it is benchmark-agnostic and effective at discovering diverse failure instances within fact-checking tasks, it does not explicitly consolidate failures into structured, interpretable failure modes.
Similarly, APT \citep{rao2025apt} focuses on automatically acquiring weakness cases for iterative preference training, aiming to improve model performance rather than to analyze or characterize failure modes.
As a result, these methods remain largely case-centric and do not provide mode-level diagnostics. Other related approaches, such as SEA \citep{Song2025DiscoveringKD}, focus on efficiently harvesting large numbers of errors in knowledge-base–driven settings.
In contrast, \methodName\ is designed to automatically discover and induce structured failure modes via macro-level probing strategy search, providing explicit, interpretable explanations of how and why a model fails.
Its fully automated evaluation pipeline requires no human intervention, while leveraging agentic tools to enforce data quality control by ensuring the truthfulness and validity of generated test cases.

\newpage

\section{Details of \methodName}
\label{app:details_probellm}

\subsection{\methodName\ Algorithm Process}
\label{sec:algorithm}

The overall algorithm of \methodName\ is as follows: 

\textbf{Initialization.}
Initialize the failure pool $\mathcal{F}$ using a seed dataset $\mathcal{S}_0$ by probing until $n$ failure cases are obtained.
Each failure initializes a seed node under both \textsc{Macro} and \textsc{Micro} search strategy roots, with node statistics initialized accordingly.

\textbf{Adaptive probing (\S\ref{sec:hierachical_MCTS} \& \S\ref{sec:gen}).}
Repeat hierarchical MCTS rollouts until a stopping criterion is met.
In each rollout, select a search strategy (\textsc{Macro}/\textsc{Micro}) via UCB, traverse the corresponding tree via UCB selection, expand under the width constraint by sampling $x_{\text{new}}\sim G_{\mathsf{s}}(\cdot\mid u)$, evaluate and verify $r_{\text{new}}$, and back up statistics along the selected path.
Add $x_{\text{new}}$ to $\mathcal{F}$ if $r_{\text{new}}=1$.
The probing process stops when either the total number of simulations reaches a predefined budget or the number of discovered failure cases reaches a target threshold.

\textbf{Failure-mode construction (\S\ref{sec:cases2modes}).}
Compute failure-aware embeddings for cases in $\mathcal{F}$, cluster embeddings into $\{\mathcal{C}_k\}_{k=1}^{K}$, and induce each cluster with boundary-aware evidence selection to obtain failure-mode descriptions.

\textbf{Output.}
Return the discovered failure modes and their associated failure cases for analysis and downstream mitigation.

\subsection{Scope of Test Cases}
\label{app:scope}
\methodName\ generates test cases with well-defined ground-truth answers.
In our current instantiation, model outputs are evaluated using an LLM-based judge; however, the presence of ground truth provides a stable reference that substantially improves the reliability of these judgments.
Compared to open-ended settings where correctness is inherently subjective, ground-truth-anchored questions constrain the judge’s role to verifying factual or rule-based consistency, reducing ambiguity and variance in failure signals.
The \methodName\ framework itself is not restricted to closed-form questions and can be extended to open-ended scenarios by adopting calibrated or judging mechanisms, which we leave for future work.

\subsection{Tool Details}

\textbf{Python Execution.}
The Python execution environment plays a dual role in \methodName. Beyond serving as a testing sandbox, it acts as a tool-augmentation layer that allows the generator to invoke Python during reasoning for exact computation (e.g., numerical calculation and data processing). This design instantiates a \emph{tool-augmented reasoning} paradigm, reducing errors caused by heuristic or approximate reasoning.

\textbf{Perturbation.}
Perturbation is used to probe intrinsic model brittleness rather than knowledge gaps. All perturbations are constrained to preserve the original semantics, ensuring that observed failures are attributable to model vulnerability under minimal input variation.

\textbf{Web Retrieval.}
Web retrieval powered by OpenAI \footnote{\url{https://platform.openai.com/docs/guides/tools-web-search}} supports both answer construction and self-verification. Retrieved evidence is used to validate factual correctness and reference consistency, providing an interpretable basis for downstream analysis.

\begin{table}[h]
\centering
\small
\caption{Intuition behind boundary-aware evidence selection for failure-mode induction.
Different evidence types play complementary roles: central failures reveal the core error,
while boundary failures and nearby non-failures expose where the failure region begins and ends.}
\label{tab:boundary_evidence}
\begin{tabular}{l m{3cm} m{4.5cm} m{3.5cm}}
\toprule
\textbf{Evidence Type} 
& \textbf{Selection Criterion} 
& \textbf{Evidence Contribution} 
& \textbf{Bias if Omitted} \\
\midrule
\textbf{Central failures}
& Typical failures near the cluster center
& The core error pattern the model repeatedly makes
& The failure mode becomes vague or internally inconsistent \\[0.6em]

\textbf{Boundary failures}
& Failures near the edge of the cluster
& Where the model \emph{starts} to fail under small changes
& The induction over-generalizes the failure scope \\[0.6em]

\textbf{Nearby non-failures}
& Closest correct cases to boundary failures
& What the model can still do \emph{correctly} in similar contexts
& The LLM for induction cannot tell what distinguishes failure from success \\
\bottomrule
\end{tabular}
\end{table}

\begin{algorithm}[h]
\caption{\methodName}
\small
\label{alg:probellm}
\begin{algorithmic}[1]
\REQUIRE Seed dataset $\mathcal{S}_0$, target model $f_\theta$, verifier $V$, simulation budget $T_{\max}$, failure budget $M$
\ENSURE Failure modes $\{\mathcal{C}_k\}_{k=1}^{K}$ with summaries, and failure pool $\mathcal{F}$

\STATE \textcolor{stageblue}{\textit{\textbf{// Stage 1: Initialization}}}
\STATE $\mathcal{F}\leftarrow\emptyset$; initialize \textsc{Macro}/\textsc{Micro} roots
\STATE Probe $\mathcal{S}_0$ to collect $n$ initial failure cases; add them to $\mathcal{F}$ as seed nodes

\STATE \textcolor{stagegreen}{\textit{\textbf{// Stage 2: Adaptive probing (MCTS)}}}
\STATE $t \leftarrow 0$
\WHILE{$t < T_{\max}$ \AND $|\mathcal{F}| < M$}
    \STATE Select mode $\mathsf{s}\in\{\textsc{Macro},\textsc{Micro}\}$ via UCB
    \STATE Traverse within the mode tree via UCB to a node $u$ eligible for expansion
    \STATE Sample and attach $x_{\text{new}}\sim G_{\mathsf{s}}(\cdot\mid u)$ (width-capped)
    \STATE Query $y\leftarrow f_\theta(x_{\text{new}})$; obtain  $\mathbb{I}_{\mathrm{fail}}(x)\triangleq \mathbbm{1}\!\left[\,V\!\bigl(f_{\theta}(x),\,y^{\star}(x)\bigr)=0\,\right]$ 
    \IF{$\mathbb{I}_{\mathrm{fail}}(x)=1$}
        \STATE $\mathcal{F}\leftarrow \mathcal{F}\cup\{x_{\text{new}}\}$
    \ENDIF
    \STATE Backpropagate visit and failure statistics along the selected path
    \STATE $t \leftarrow t+1$
\ENDWHILE

\STATE \textcolor{stageorange}{\textit{\textbf{// Stage 3: Mode extraction (embed \& cluster)}} }
\STATE Compute failure-aware embeddings $\{z(x)\}_{x\in\mathcal{F}}$ and cluster into $\{\mathcal{C}_k\}_{k=1}^{K}$

\STATE \textcolor{stagered}{\textit{\textbf{// Stage 4: Boundary-aware induction}}}
\FOR{$k = 1,2,\dots,K$}
    \IF{$|\mathcal{C}_k| < \tau$}
        \STATE Prompt LLM for induction with all failures in $\mathcal{C}_k$; obtain summary $s_k$
    \ELSE
        \STATE Compute centroid $\mu_k$ of $\mathcal{C}_k$ in embedding space
        \STATE Select $n_c$ central failures $\mathcal{S}^{\mathrm{core}}_k \subset \mathcal{C}_k$ (closest to $\mu_k$)
        \STATE Select $b$ boundary failures $\mathcal{S}^{\mathrm{bd}}_k \subset \mathcal{C}_k$ (most peripheral/low-density)
        \FOR{each $x \in \mathcal{S}^{\mathrm{bd}}_k$}
            \STATE Retrieve nearest verified non-failure $x^{+}$ from the search tree (by embedding distance)
        \ENDFOR
        \STATE Prompt LLM for induction with $\mathcal{S}^{\mathrm{core}}_k$ and pairs $\{(x,x^{+})\,:\,x\in\mathcal{S}^{\mathrm{bd}}_k\}$; obtain summary $s_k$
    \ENDIF
\ENDFOR

\STATE \textcolor{stageblue}{\textit{\textbf{// Output}}}
\STATE \textbf{Output:} $\{(\mathcal{C}_k,s_k)\}_{k=1}^{K}$ and $\mathcal{F}$
\end{algorithmic}
\end{algorithm}

\newpage

\section{Experiment Details}
\label{app:exp_details}

\subsection{Setup Details}

\textbf{Model details.} We show the details of selected models used in the experiments in \autoref{tab:model_list_open_weight}.

\begin{table}[t]
\centering
\begin{minipage}{0.8\linewidth}
\centering
\small
\caption{Target model list in the experiments with vendor and whether weights are publicly downloadable (open-weight).}
\label{tab:model_list_open_weight}
\setlength{\tabcolsep}{5pt}
\renewcommand{\arraystretch}{1.12}
\resizebox{\linewidth}{!}{
\begin{tabular}{l c c}
\toprule
\textbf{Model} & \textbf{Vendor} & \textbf{Open-weight?} \\
\midrule
\texttt{Claude-3.5-sonnet}~\cite{anthropic2024claude35sonnet} & Anthropic & \xmark \\
\texttt{Deepseek-v3.2}~\cite{deepseekai2025deepseekv32} & DeepSeek & \cmark \\
\texttt{devstral-2512}~\cite{mistral2025devstral2512} & Mistral AI & \cmark \\
\texttt{ministral-8b-2512}~\cite{mistral_ministral3_8b_25_12} & Mistral AI & \cmark \\
\texttt{Gemini-2.5-flash}~\cite{google2025gemini25flash} & Google & \xmark \\
\texttt{GPT-4o-mini}~\cite{openai2024gpt4omini} & OpenAI & \xmark \\
\texttt{GPT-oss-20b}~\cite{openai2025gptoss120bgptoss20bmodel} & OpenAI & \cmark \\
\texttt{Llama-3.1-8b-instruct}~\cite{meta2024llama31} & Meta & \cmark \\
\texttt{phi-4}~\cite{DBLP:journals/corr/abs-2412-08905} & Microsoft & \cmark \\
\texttt{Grok-4.1-fast}~\cite{xai2025grok41fast} & xAI & \xmark \\
\texttt{olmo-3-7b-instruct}~\cite{olmo2025olmo3} & AI2 & \cmark \\
\texttt{granite-4.0}~\cite{ibm_granite4_2025} & IBM & \cmark \\
\bottomrule
\end{tabular}
}
\end{minipage}
\end{table}

\textbf{Benchmark details}. We select five widely-used benchmarks in the experiments:

\begin{itemize}[leftmargin=4pt,labelsep=0.6em,itemsep=0.25em,topsep=0pt,parsep=0pt,partopsep=0pt]
  \item \textbf{MMLU (Measuring Massive Multitask Language Understanding)} \citep{Hendrycks2020MeasuringMM}: A multiple-choice benchmark covering dozens of academic disciplines, designed to evaluate a model’s broad factual knowledge and reasoning ability across diverse domains.
  \item \textbf{SuperGLUE} \citep{Wang2019SuperGLUEAS}: A collection of challenging natural language understanding tasks that test advanced reasoning, inference, and language comprehension beyond the original GLUE benchmark.
  \item \textbf{MBPP (Mostly Basic Python Problems)} \citep{austin2021program}: A code generation benchmark consisting of short Python programming tasks with natural language descriptions and unit tests, aimed at assessing basic coding proficiency.
  \item \textbf{HellaSwag} \citep{Zellers2019HellaSwagCA}: A commonsense reasoning benchmark where models must select the most plausible continuation of a given situation from multiple candidate endings.
  \item \textbf{TruthfulQA} \citep{Lin2021TruthfulQAMH}: A benchmark that evaluates whether language models produce truthful answers and avoid common misconceptions, especially in domains where humans often answer incorrectly.
\end{itemize}

\textbf{Metric details.} We show three metrics' definition as follows:
\begin{itemize}[leftmargin=4pt,labelsep=0.6em,itemsep=0.25em,topsep=0pt,parsep=0pt,partopsep=0pt]
\item \textbf{Error Rate.}
The error rate is defined as the proportion of generated questions for which the target model produces an incorrect answer, measured over all generated questions.
\item \textbf{Noise Rate (NR).}
The noise rate is defined as the proportion of discovered test cases that are identified as noise by HDBSCAN and thus not assigned to any failure-mode cluster.
\item \textbf{Cluster-Size Standard Deviation (Std$_{\text{CS}}$).}
Let $\{|\mathcal{C}_k|\}_{k=1}^{K}$ denote the sizes of the discovered failure-mode clusters.
Std$_{\text{CS}}$ is computed as the standard deviation of $\{|\mathcal{C}_k|\}_{k=1}^{K}$.
\end{itemize}

\textbf{Other details.} For each benchmark, different runs are initialized with identical seeds, ensuring that the same initial samples are used and that the experimental setup is fully reproducible. We control the probing budget by fixing the number of simulations to 100 per benchmark. Consequently, each target model is evaluated with a total of 500 simulations across the five benchmarks used in our experiments.

Regarding tool usage, we apply special handling for Python execution failures. When a \texttt{python\_exec} call fails (e.g., due to runtime errors or invalid code), the error message is fed back to the generator model, which is allowed to retry up to three times. If all retry attempts fail, the corresponding generation is discarded and does not contribute to the probing budget.

\newpage
\subsection{Baseline Comparison}
\label{app:baseline_compare}

%
%

\autoref{fig:human_eval_baseline} reports the human evaluation of question quality (E1) across different frameworks. \methodName\ consistently achieves the highest scores on answerability, unambiguity, and correctness, indicating that it generates higher-quality and more reliable test questions. This improvement can be attributed to \methodName's use of agentic tools such as web search and Python execution, which enable evidence-grounded question construction and precise constraint verification.

\subsection{Human Evaluation}
\label{app:human_eval}

\autoref{tab:human_eval_summary} reports consistent human-evaluation results across all protocols. 
\textbf{E1} confirms that synthesized test cases are largely answerable, unambiguous, and paired with correct ground truths, validating data quality. 
\textbf{E2} shows perfect agreement between the LLM-based verifier and human judgments, supporting the reliability of automatic verification. 
\textbf{E3} indicates that discovered clusters are coherent and well separated, while \textbf{E4} demonstrates that the induced failure-mode descriptions are faithful, specific, and diagnostically useful.



\begin{figure}[t]
\centering

\begin{minipage}[t]{0.48\linewidth}
\centering
\small
\setlength{\tabcolsep}{10pt}
\renewcommand{\arraystretch}{1.05}
\captionof{table}{Human evaluation results. Values are reported as $\mu_{\sigma}$ (mean with standard deviation in the subscript).}
\label{tab:human_eval_summary}

\begin{tabular}{l|cc}
\toprule
\rowcolor{HeaderPurple}
\textbf{Eval} & \textbf{Metric} & \textbf{Score ($\mu_{\sigma}$)} \\
\midrule
\rowcolor{white}
\cellcolor{white} & Answerable  & $0.936_{0.076}$ \\
\rowcolor{RowPurple}
\cellcolor{white}\textbf{E1} & Unambiguous & $0.964_{0.059}$ \\
\rowcolor{white}
\cellcolor{white} & Correctness & $0.936_{0.100}$ \\
\midrule
\rowcolor{RowPurple}
\cellcolor{white}\textbf{E2} & Correctness & $1.000_{0.000}$ \\
\midrule
\rowcolor{white}
\cellcolor{white}\textbf{E3} & Accuracy    & $0.933_{0.091}$ \\
\midrule
\rowcolor{RowPurple}
\cellcolor{white} & Faithfulness & $4.268_{0.599}$ \\
\rowcolor{white}
\cellcolor{white}\textbf{E4} & Specificity  & $4.368_{0.418}$ \\
\rowcolor{RowPurple}
\cellcolor{white} & Usefulness   & $4.468_{0.364}$ \\
\bottomrule
\end{tabular}
\end{minipage}
\hfill
\begin{minipage}[t]{0.50\linewidth}
\vspace{0pt}
\centering
\captionof{figure}{Human evaluation of question quality (E1) across different frameworks, measuring answerability, unambiguity, and correctness.}
\vspace{5pt}
\includegraphics[width=\linewidth]{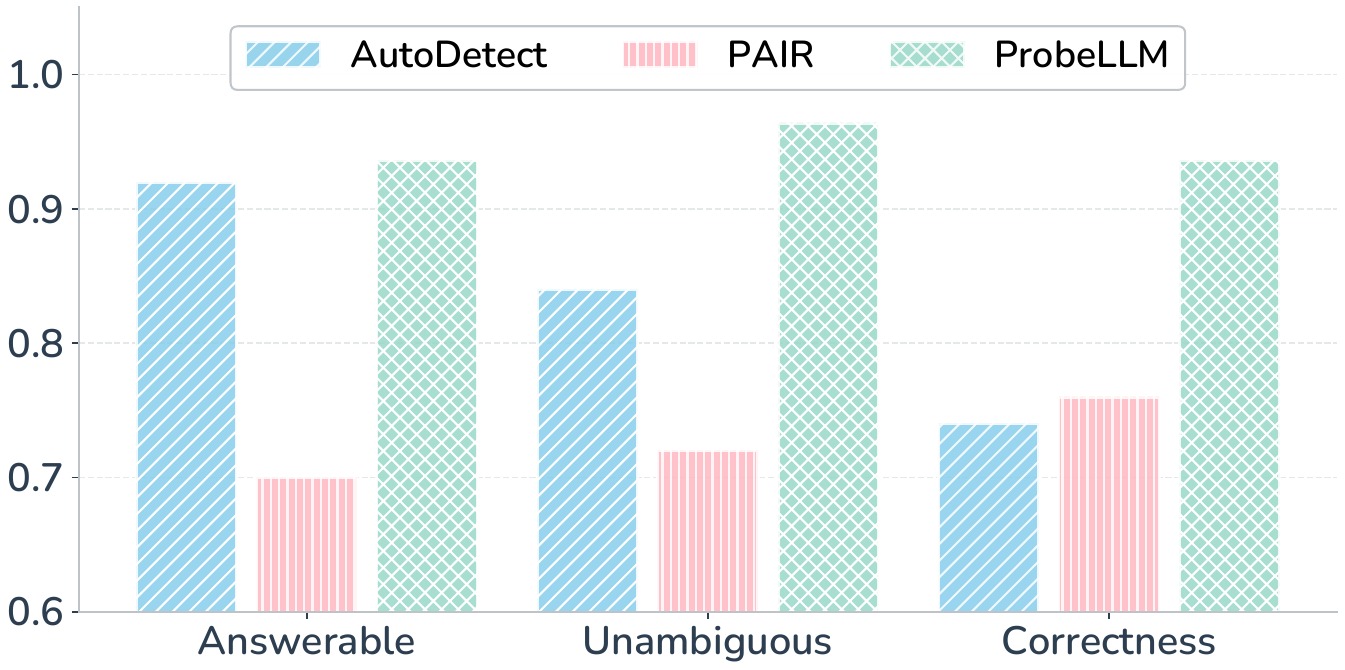}
\label{fig:human_eval_baseline}
\end{minipage}
\end{figure}

\subsection{Analysis of Different Generators}
\label{app:diff_generator}

\begin{table}[t]
\centering
\small
\setlength{\tabcolsep}{4.5pt}
\renewcommand{\arraystretch}{1.2}
\caption{The results of another two generator: \texttt{Claude-4.5-sonnet} and \texttt{Llama-3.3-70b-instruct} across 4 target models. Avg. ER means average error rate and cluster means the number of clusters.}
\label{tab:different_generator}
\resizebox{\linewidth}{!}{
\begin{tabular}{l|ccc|ccc|ccc|ccc|c|c}
\toprule[1pt]

\multicolumn{1}{c|}{\cellcolor{white}\multirow{3}{*}{\textbf{Target Model}}} & \multicolumn{3}{c|}{\textcolor{HeaderBrown}{\texttt{MMLU}}} &
\multicolumn{3}{c|}{\textcolor{HeaderBrown}{\texttt{SuperBLUE}}} & \multicolumn{3}{c|}{\textcolor{HeaderBrown}{\texttt{MBPP}}} &
\multicolumn{3}{c|}{\textcolor{HeaderBrown}{\texttt{TruthfulQA}}} &
\multicolumn{1}{c|}{\cellcolor{white}\multirow{3}{*}{\textbf{Avg ER}}} &
\multicolumn{1}{c}{\cellcolor{white}\multirow{3}{*}{\textbf{\# Cluter}}} \\
\multicolumn{1}{c|}{} &
\multicolumn{2}{c}{\textbf{Sim. Type}} & \textbf{Error} &
\multicolumn{2}{c}{\textbf{Sim. Type}} & \textbf{Error} &
\multicolumn{2}{c}{\textbf{Sim. Type}} & \textbf{Error} &
\multicolumn{2}{c}{\textbf{Sim. Type}} & \textbf{Error} & & \\
\cmidrule(lr){2-3}
\cmidrule(lr){5-6}
\cmidrule(lr){8-9}
\cmidrule(lr){11-12}
\multicolumn{1}{c|}{} &
\cellcolor{white}\textsc{Ma.} & \cellcolor{white}\textsc{Mi.} & \textbf{Rate$_{(\%)}$} &
\cellcolor{white}\textsc{Ma.} & \cellcolor{white}\textsc{Mi.} & \textbf{Rate$_{(\%)}$} &
\cellcolor{white}\textsc{Ma.} & \cellcolor{white}\textsc{Mi.} & \textbf{Rate$_{(\%)}$} &
\cellcolor{white}\textsc{Ma.} & \cellcolor{white}\textsc{Mi.} & \textbf{Rate$_{(\%)}$} & & \\
\midrule
\multicolumn{15}{c}{\textbf{\texttt{Claude-4.5-sonnet}}} \\
\midrule
\rowcolor{RowBlue}\texttt{Deepseek-v3.2}    & 33\% & 67\% & 37\% & 23\% & 77\% & 41\% & 22\% & 78\% & 35\% & 19\% & 81\% & 50\% & 41\% & 9 \\
\texttt{Gemini-2.5-flash} & 38\% & 62\% & 28\% & 21\% & 79\% & 45\% & 50\% & 50\% & 20\% & 33\% & 67\% & 31\% & 31\% & 6 \\
\rowcolor{RowBlue}\texttt{GPT-5.2}          & 34\% & 66\% & 26\% & 31\% & 69\% & 31\% & 21\% & 79\% & 43\% & 31\% & 69\% & 28\% & 32\% & 10 \\
\texttt{ministral-8b}     & 37\% & 63\% & 56\% & 21\% & 79\% & 64\% & 41\% & 59\% & 46\% & 32\% & 68\% & 53\% & 55\% & 13 \\

\midrule
\multicolumn{15}{c}{\textbf{\texttt{Llama-3.3-70b-instruct}}} \\
\midrule
\rowcolor{RowPurple} \texttt{Deepseek-v3.2}    & 25\% & 75\% & 63\% & 40\% & 60\% & 42\% & 12\% & 88\% & 70\% & 32\% & 68\% & 48\% & 56\% & 12 \\
\texttt{Gemini-2.5-flash} & 19\% & 81\% & 67\% & 41\% & 59\% & 39\% & 18\% & 82\% & 71\% & 29\% & 71\% & 47\% & 56\% & 11 \\
\rowcolor{RowPurple} \texttt{GPT-5.2}          & 35\% & 65\% & 34\% & 43\% & 57\% & 35\% & 29\% & 71\% & 54\% & 24\% & 76\% & 59\% & 46\% & 10 \\
\texttt{ministral-8b}     & 37\% & 63\% & 58\% & 34\% & 66\% & 56\% & 32\% & 68\% & 75\% & 30\% & 70\% & 56\% & 61\% & 14 \\
\bottomrule
\end{tabular}
}
\end{table}

\begin{wrapfigure}{r}{0.45\linewidth}
\vspace{-10pt}
    \centering
    \includegraphics[width=\linewidth]{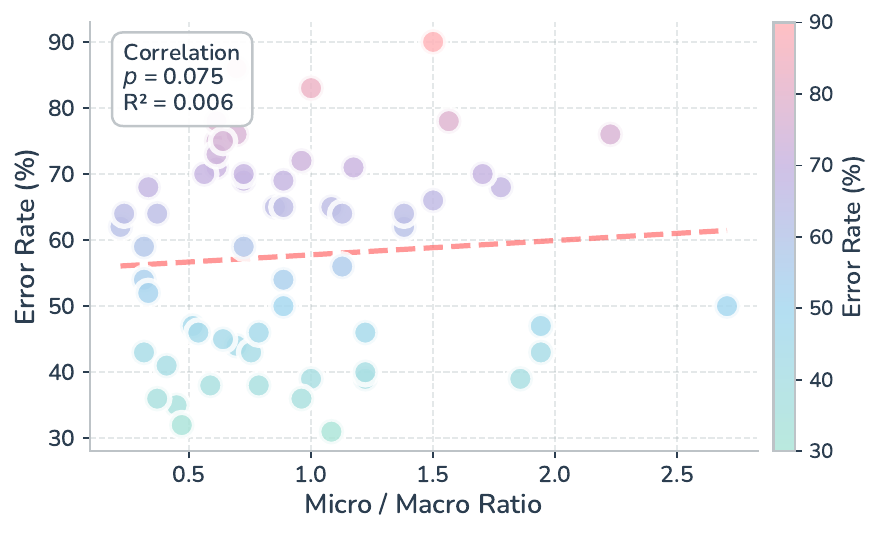}
    \caption{Relationship between the \textsc{Macro}/\textsc{Micro} simulation ratio and the error rate across all models and benchmarks.}
    \label{fig:search_type_analysis}
\end{wrapfigure}

\autoref{tab:different_generator} reports results using \texttt{Claude-4.5-sonnet} and \texttt{Llama-3.3-70b-instruct} as alternative generators. Across both generators and all four target models, \methodName\ consistently induces substantial error rates and discovers multiple failure clusters, demonstrating that its effectiveness is not tied to a specific generator. While \texttt{Llama-3.3-70b-instruct} generally yields higher average error rates and more clusters than \texttt{Claude-4.5-sonnet}, the relative trends across target models remain consistent, indicating that \methodName\ is robust to generator choice and scales with stronger generators to expose richer and more severe failure modes.

\subsection{\textsc{Macro–Micro} Allocation Analysis}
\label{app:search_type}

We analyze the relationship between the \textsc{Macro}/\textsc{Micro} simulation allocation and the resulting error rate across all target models and benchmarks. As shown in the \autoref{fig:search_type_analysis}, we do not observe a monotonic or linear correlation between the Mi/Ma ratio and the error rate. High error rates consistently arise under a wide range of Macro–Micro proportions, including both Micro-heavy and more balanced regimes. This suggests that failure discovery effectiveness in \methodName is not driven by a fixed search ratio. Instead, the results highlight the importance of adaptive budget allocation: \textsc{Macro} exploration is necessary to surface diverse failure regions, while \textsc{Micro} refinement consolidates evidence within promising areas. Over-emphasizing either regime alone does not reliably improve failure discovery, reinforcing the design choice of treating \textsc{Macro} and \textsc{Micro} as complementary, dynamically selected components rather than tunable hyperparameters.

\subsection{Mitigation Experiment}
\label{app:mitigation}

\textbf{Experimental setup.}
For training, we construct error-focused datasets by mixing \methodName-discovered incorrect samples with correctly answered ones, where the total training size is varied to study the effect of data scale; this mixture is used to prevent forgetting on previously correct behaviors \citep{Huang2025BuildingAF}. For evaluation, we consider two sets: (i) an error-prone set consisting of high–error-rate samples discovered by \methodName\ across five datasets, where most instances are answered incorrectly by the base model; and (ii) a general evaluation set randomly sampled from the remaining portions of the existing benchmarks to assess capability preservation. Training is performed using GRPO \citep{Shao2024DeepSeekMathPT} with an LLM-as-a-Judge (\texttt{GPT-5.2}) verifier, assigning a reward of 1 for correct answers and 0 otherwise, with 8 rollouts per instance and 2 training epochs.

\subsection{Model Evolution Analysis}
\label{sec:Model-Evolution}

In this section, we provide a granular analysis of how model failure modes evolve across generations, combining the quantitative metrics from \autoref{tab:main_res_targeted} and the topological visualization in \autoref{fig:clusters}.

The quantitative trajectory reveals a significant improvement in robustness. For the OpenAI family, we observe a sharp decline in both the Average Error Rate (from 75\% in \texttt{GPT-3.5} to 29\% in \texttt{GPT-5.2}) and the number of identified failure clusters (from 16 down to 5). This reduction suggests that newer models are not only becoming more accurate but are also closing off entire categories of logical loopholes. A similar trend is observed in the Mistral family, where the error rate drops from 81\% to 57\%, indicating that open-weights models are following a parallel evolutionary path, albeit with a lag in robustness convergence compared to closed-source counterparts.

The structural clarity of the UMAP visualization in \autoref{fig:clusters} further attests to \methodName's capability to disentangle and map the complex landscape of model deficits. By projecting the failure queries into a semantic space, \methodName\ effectively distinguishes between different categories of weaknesses, revealing a nuanced evolutionary relationship. The visualization clearly separates transient failure modes, which are represented by isolated dark blue centroids surrounded by older model nodes, from persistent logical flaws indicated by connected purple centroids. This topological distinction demonstrates that \methodName\ does not simply aggregate errors, but successfully identifies the lineage of failure modes, allowing us to trace which weaknesses are resolved across versions and which remain stubborn, structural deficits.

Furthermore, \methodName\ proves effective in pinpointing the shifting domain specificity of these weaknesses. As highlighted by the distinct insets, the framework detects that failure modes in advanced models retreat into increasingly niche scientific corners. For instance, \methodName\ identifies a distinct failure cluster for \texttt{GPT-5.2} centered on the highly specialized domain of \textit{"Hyperfine Splitting in EPR Spectroscopy"}. This ability to isolate distinct, domain-specific clusters confirms that \methodName\ can adaptively probe the deepening knowledge boundaries of evolving models, exposing that as models scale, their vulnerability surface morphs from general complexity to extreme disciplinary specialization.

\begin{table}[h]
\centering
\small
\caption{Main results across 8 target models (OpenAI models are shown in light blue and Mistral-family models in light purple). We report \textsc{Macro} (\textsc{Ma.}), \textsc{Micro} (\textsc{Mi.}), and the error rate (\%).}
\label{tab:main_res_targeted}
\setlength{\tabcolsep}{4.2pt}
\renewcommand{\arraystretch}{1.15}
\resizebox{\linewidth}{!}{
\begin{tabular}{l|ccc|ccc|ccc|ccc|c|c}
\toprule[1pt]

\multicolumn{1}{c|}{\cellcolor{white}\multirow{3}{*}{\textbf{Target Model}}} &
\multicolumn{3}{c|}{\textcolor{HeaderBrown}{\texttt{MMLU}}} &
\multicolumn{3}{c|}{\textcolor{HeaderBrown}{\texttt{SuperGLUE}}} &
\multicolumn{3}{c|}{\textcolor{HeaderBrown}{\texttt{MBPP}}} &
\multicolumn{3}{c|}{\textcolor{HeaderBrown}{\texttt{TruthfulQA}}} &
\multicolumn{1}{c|}{\cellcolor{white}\multirow{3}{*}{\textbf{Avg. ER}}} &
\multicolumn{1}{c}{\cellcolor{white}\multirow{3}{*}{\textbf{\# Cluster}}} \\

\multicolumn{1}{c|}{} &
\multicolumn{2}{c}{\textbf{Sim. Type}} & {\textbf{Error}} &
\multicolumn{2}{c}{\textbf{Sim. Type}} & {\textbf{Error}} &
\multicolumn{2}{c}{\textbf{Sim. Type}} & {\textbf{Error}} &
\multicolumn{2}{c}{\textbf{Sim. Type}} & {\textbf{Error}} &
\multicolumn{1}{c|}{} &
\multicolumn{1}{c}{} \\

\cmidrule(lr){2-3}
\cmidrule(lr){5-6}
\cmidrule(lr){8-9}
\cmidrule(lr){11-12}

\multicolumn{1}{c|}{} &
\cellcolor{white}\textsc{Ma.} & \cellcolor{white}\textsc{Mi.} & \textbf{Rate$_{(\%)}$} &
\cellcolor{white}\textsc{Ma.} & \cellcolor{white}\textsc{Mi.} & \textbf{Rate$_{(\%)}$} &
\cellcolor{white}\textsc{Ma.} & \cellcolor{white}\textsc{Mi.} & \textbf{Rate$_{(\%)}$} &
\cellcolor{white}\textsc{Ma.} & \cellcolor{white}\textsc{Mi.} & \textbf{Rate$_{(\%)}$} &
\multicolumn{1}{c|}{} &
\multicolumn{1}{c}{} \\

\midrule
\multicolumn{15}{c}{\textbf{\texttt{OpenAI Family}}} \\
\midrule
\rowcolor{RowBlue}
\texttt{GPT-3.5-turbo (Mar~2023)} &
26\% & 74\% & 76\% &
16\% & 84\% & 78\% &
26\% & 74\% & 80\% &
22\% & 78\% & 67\% &
75\% &
16 \\

\texttt{GPT-4.1 (Apr~2025)} &
34\% & 66\% & 31\% &
22\% & 78\% & 42\% &
50\% & 50\% & 22\% &
23\% & 77\% & 47\% &
36\% &
6 \\

\rowcolor{RowBlue}
\texttt{GPT-4o-mini (Jul~2024)} &
16\% & 84\% & 71\% &
18\% & 83\% & 57\% &
20\% & 80\% & 65\% &
29\% & 71\% & 57\% &
63\% &
10 \\

\texttt{GPT-5.2 (Dec~2025)} &
32\% & 68\% & 27\% &
36\% & 64\% & 21\% &
35\% & 65\% & 25\% &
23\% & 77\% & 41\% &
29\% &
5 \\

\midrule
\multicolumn{15}{c}{\textbf{\texttt{Mistral Family}}} \\
\midrule
\rowcolor{RowPurple}
\texttt{Mistral-7B-Ins. (May~2024)} &
19\% & 81\% & 81\% &
27\% & 73\% & 65\% &
13\% & 87\% & 87\% &
38\% & 62\% & 77\% &
78\% &
20 \\

\texttt{Mistral-7B-Ins.-v0.1 (Sep~2023)} &
23\% & 77\% & 86\% &
19\% & 81\% & 72\% &
31\% & 69\% & 82\% &
21\% & 79\% & 82\% &
81\% &
16 \\

\rowcolor{RowPurple}
\texttt{Ministral-8B (Oct~2024)} &
27\% & 73\% & 81\% &
14\% & 86\% & 67\% &
32\% & 68\% & 77\% &
24\% & 76\% & 72\% &
74\% &
19 \\

\texttt{Ministral-3-8B-2512 (Dec~2025)} &
14\% & 86\% & 59\% &
21\% & 79\% & 60\% &
31\% & 69\% & 49\% &
22\% & 78\% & 60\% &
57\% &
16 \\

\bottomrule[1pt]
\end{tabular}
}
\label{table:Evolution-Compare}
\end{table}


\subsection{Clustering Sensitivity Analysis}
\label{app:cluster_sensitivity}

For sensitivity analysis, we vary the key HDBSCAN hyperparameters that control clustering granularity.
Specifically, we sweep \texttt{min\_cluster\_size} over $\{3,6,9,12,15\}$ and \texttt{min\_samples} over $\{1,3,5,7,9\}$,
resulting in a diverse set of clustering configurations spanning different density and stability regimes.

To assess the robustness of failure-mode discovery to clustering hyperparameters, 
we measure \emph{conditional pairwise co-assignment consistency} across multiple HDBSCAN configurations.
Let $\mathcal{F}=\{u_1,\dots,u_N\}$ denote the set of discovered failure cases and 
$\mathcal{S}=\{s_1,\dots,s_M\}$ denote different clustering settings.
For a configuration $s\in\mathcal{S}$, each failure case is assigned a cluster label
$c^{(s)}(u)\in\{1,\dots,K_s\}\cup\{\textsc{noise}\}$.

For a pair of failure cases $(u_i,u_j)$, we define a co-assignment indicator under $s$ as
\[
\mathbf{1}^{(s)}_{ij}=
\mathbb{I}\big[c^{(s)}(u_i)=c^{(s)}(u_j)\neq \textsc{noise}\big].
\]
The cross-configuration consistency of the pair is
\[
\mathrm{Cons}(u_i,u_j)=\frac{1}{|\mathcal{S}|}\sum_{s\in\mathcal{S}}\mathbf{1}^{(s)}_{ij}.
\]

Since most failure pairs are unrelated, we report a \emph{conditional} consistency score by averaging
$\mathrm{Cons}(u_i,u_j)$ over pairs that co-occur in the same non-noise cluster under at least one configuration.

\begin{wrapfigure}{r}{0.5\linewidth}
    \centering
    \includegraphics[width=\linewidth]{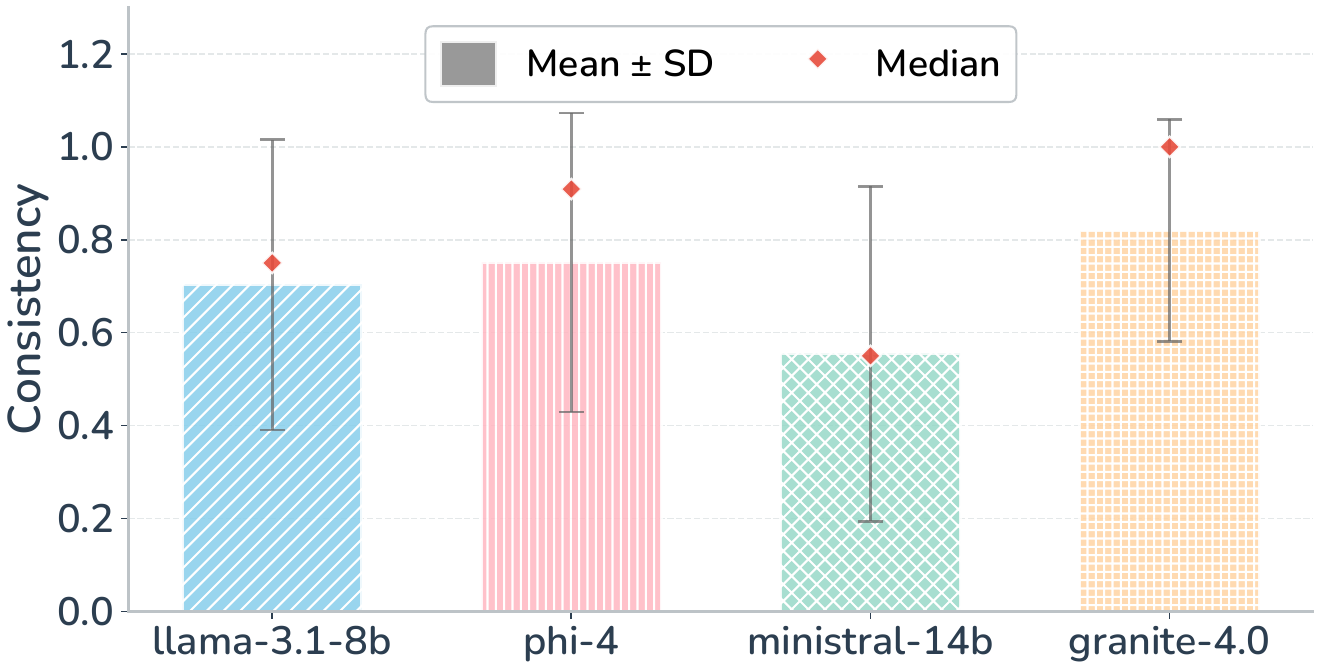}
    \caption{Clustering sensitivity analysis.
    Conditional pairwise co-assignment consistency across different clustering settings.}
    \label{fig:cluster_sensitivity}
\end{wrapfigure}

\autoref{fig:cluster_sensitivity} reports the conditional pairwise co-assignment consistency across four target models.
All models exhibit consistently high consistency scores, with mean values between $0.55$ and $0.82$,
indicating that discovered failure modes are largely stable under different clustering hyperparameters.
The median consistency is high across models, suggesting that the core structure of dominant failure modes
remains invariant despite changes in clustering granularity.
Overall, these results demonstrate that PROBELLM’s failure-mode discovery is not sensitive to specific
clustering parameter choices.

\clearpage
\subsection{Cluster Overview}

\begin{figure}[h]
    \centering
    \setlength{\abovecaptionskip}{2pt}
    \includegraphics[width=1\linewidth]{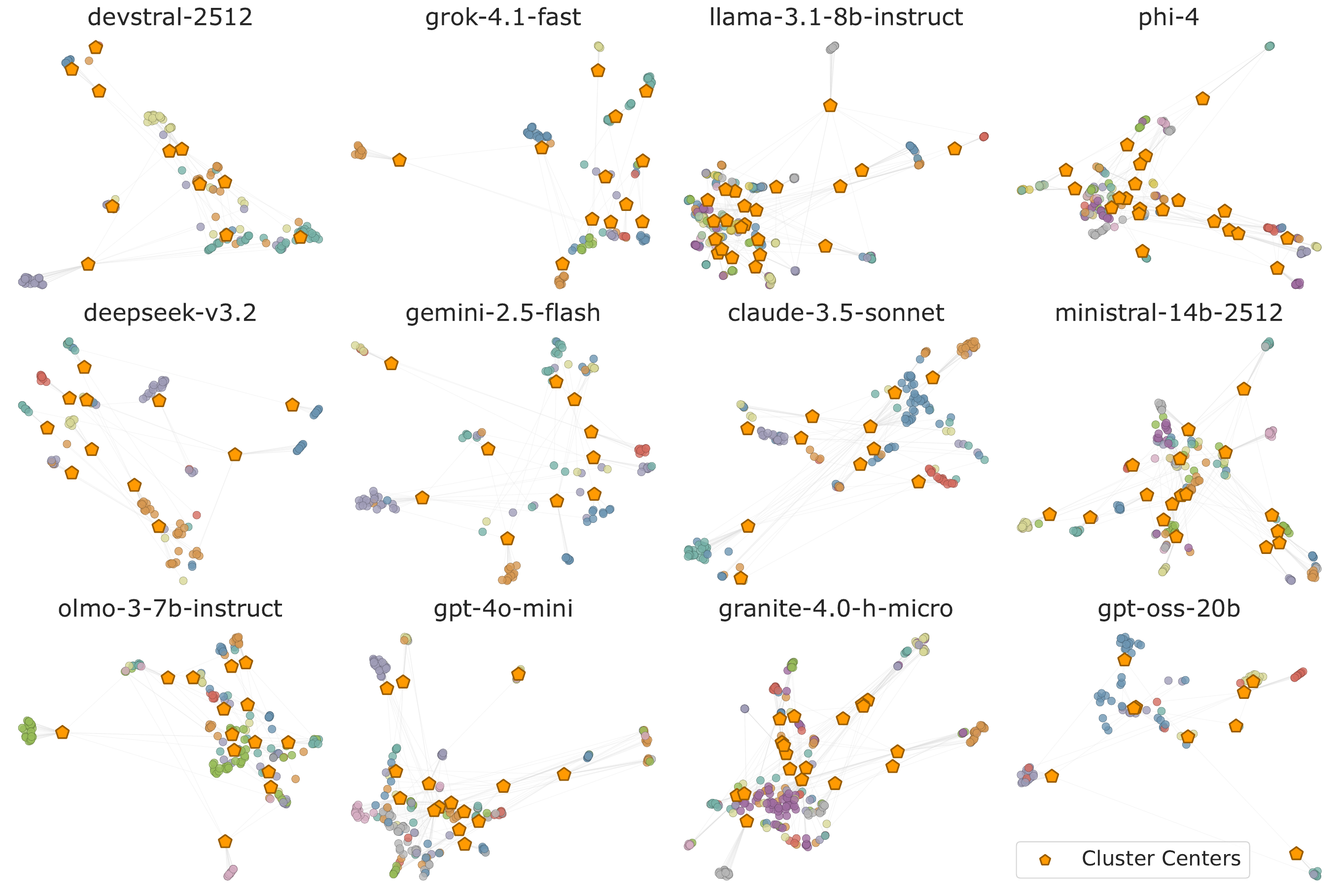}
    \caption{UMAP visualization of vulnerability embeddings identified by \methodName\ across 12 LLMs. Small nodes represent individual failure samples, while orange pentagons indicate the centroids of specific vulnerability topics. Faint lines connect samples to their respective centers. }
    \label{fig:clusters_grid}
\end{figure}

\autoref{fig:clusters_grid} presents a comparative UMAP visualization of the vulnerability embeddings identified by \methodName\ across twelve different LLMs. By examining the topological structure and the distribution of cluster centroids across these models, we identify two distinct patterns that reveal the underlying nature of their safety deficits: one characterized by structural consolidation and the other by fragmentation.

The first pattern, exemplified by highly capable models such as \texttt{Gemini-2.5-Flash} and \texttt{Claude-3.5-Sonnet}, manifests as a consolidated topology characterized by a low number of distinct cluster centroids. In these visualizations, the failure samples aggregate into a few well-defined, compact groups centered around a limited set of orange pentagons. This structural simplicity suggests that the safety deficits of these advanced models are not random but are concentrated in specific, high-level semantic categories. The clear separation between these few clusters indicates that the models maintain robust performance across most domains, failing only when triggered by highly consistent and identifiable vulnerability patterns. In contrast, the second pattern is evident in models like \texttt{Llama-3.1-8b-Instruct}, which exhibit a highly fragmented topology populated by a large number of cluster centroids. The visualization displays a complex archipelago of numerous, smaller clusters rather than a few dominant ones. This high cardinality of centroids indicates a diverse and widespread range of failure modes, suggesting that the model's weaknesses are not confined to a few specific topics but are scattered across a broader spectrum of semantic variations. Such a fragmented structure implies a lower degree of generalization in safety alignment, where the model exhibits fragility across many distinct, fine-grained scenarios rather than converging into a few systematic behaviors.

\newpage 

\section{Baseline Details}
\label{app:baseline_details}

\textbf{Disclaimer.} We note that these methods are designed with different objectives and evaluation targets (e.g., PAIR optimizes attack success, not failure-mode coverage; we include it as a reference point rather than a strict competitor); accordingly, our comparison is intended solely to provide a reference under comparable settings, rather than to diminish or subsume the contributions of these baselines.

\textbf{PAIR \citep{chao2025jailbreaking}.}
We implement PAIR as an iterative adversarial prompting framework following its original design.
For each run, we initialize PAIR with questions sampled from five benchmarks: \texttt{MMLU}, \texttt{SuperGLUE}, \texttt{MBPP}, \texttt{HellaSwag}, and \texttt{TruthfulQA}, drawing 10 questions from each benchmark.
Each question is first issued to the \textit{Target} model to obtain a response.
A \textit{Judge} model then evaluates each question--answer pair and assigns a score in $[1,10]$, where lower scores indicate poorer performance.
We set the threshold to $k=5$.
If the score is below $k$, the question is treated as a low-score case and passed to the \textit{Questioner}, which generates a new adversarial question conditioned on the tuple (previous question, target response, judge score).
This refinement process continues until either the target model achieves a score $\geq k$ or a maximum of three refinement steps is reached.
All generated questions, responses, and scores are recorded.
We run PAIR for 200 independent trials under this setting.

\textbf{AutoDetect \citep{cheng2024autodetect}.}
We implement AutoDetect as a multi-stage weakness discovery and targeted probing framework.
For each run, we sample 5 initial questions from each of the same five benchmarks used in PAIR.
These questions are directly evaluated by the \textit{Target} model, and a \textit{Judge} assigns integer scores in $[1,5]$ using ground-truth references, with lower scores indicating more severe failures.
We set the threshold to $k=3$, so scores in $\{1,2\}$ are considered low-score cases.
In each iteration, if more than two low-score cases are identified, they are aggregated by an \textit{Assessor} to induce high-level evaluation topics representing recurring failure patterns.
The number of induced topics is flexible, ranging from one up to the total count of low-score cases.
Subsequently, the \textit{Questioner} generates three specifically designed, more challenging questions for \textit{each} induced topic by increasing constraints or reasoning complexity.
The entire process runs for at most three rounds (including the initial benchmark round), and terminates early if no new low-score topics are identified.
AutoDetect is also executed for 200 independent trials, with all intermediate results logged.

\newpage

\section{Theoretical Proof}
\label{app:theoretical}

\subsection{Feasibility of \textsc{Macro-Micro} Probing}
In this part, we aim to give the proof of the feasibility of solidly identifying a failure mode from a single failure case via \textsc{Micro} case provided by \textsc{Macro} selection.

To formalize our proof, we need to first define the semantic space and the behavior of the model failures within that space. Let $\mathcal{X} \subseteq \mathbb{R}^d$ be a high-dimensional semantic space. We define a failure function $f: \mathcal{X} \to \{0, 1\}$, where $f(x)=1$ indicates a model failure. The failure probability field is defined as $P(x) = \text{Pr}(f(x)=1)$.

\begin{definition}[Failure Mode]
A failure mode $\mathcal{M} \subset \mathcal{X}$ is a contiguous sub-region in the semantic space where the average failure probability exceeds a significant threshold $\tau$, i.e., $\mathbb{E}_{x \sim \mathcal{M}}[P(x)] \geq \tau$.
\end{definition}

\begin{assumption}[Local Semantic Coherence]
LLM failures are not isolated stochastic events but result from systematic knowledge gaps. Based on prior findings in behavioral testing and systematic error discovery that model failures often cluster over semantically similar perturbations and contiguous regions in representation space~\citep{Ribeiro2020BeyondAB, dEon2021TheSA}. Thus, we can begin with a benign assumption or description, if a failure case $x_0$ is identified ($f(x_0)=1$), there exists a local neighborhood $B(x_0, \epsilon) \subseteq \mathcal{M}$ such that for any $x \in B(x_0, \epsilon)$, $P(x) \geq p_{min} > 0$.
\end{assumption}

\begin{theorem}[Mode Identification Convergence]
Given an initial failure $x_0$, the \textsc{Micro} probing strategy, which samples $n$ independent semantic variants in $B(x_0, \epsilon)$, identifies the failure mode with probability $P_{succ}$ satisfying:
\begin{equation}
    P_{succ} \geq 1 - (1 - p_{min})^n
\end{equation}
As $n \to \infty$, $P_{succ}$ converges to 1 exponentially.
\end{theorem}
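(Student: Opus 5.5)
The plan is to make precise what ``identifies the failure mode'' means and then reduce the claim to a complement bound over independent trials. The natural operationalization, consistent with the \textsc{Micro} regime of \S\ref{sec:gen}, is that the mode is identified once at least one of the $n$ sampled variants $x_1,\dots,x_n\in B(x_0,\epsilon)$ is itself a verified failure. Such a hit corroborates $x_0$ with a second, distinct failure lying inside the region $\mathcal{M}\supseteq B(x_0,\epsilon)$ guaranteed by the Local Semantic Coherence assumption. I will state this as the success event
\[
S_n \triangleq \{\exists\, i\le n : f(x_i)=1\}.
\]
I will then note that $P_{succ}\ge \Pr(S_n)$ under this definition, and that stronger notions of identification (for instance, demanding $k$ hits) can be handled analogously.

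The key steps, in order, are as follows.
\begin{enumerate}
\item Condition on the sampled locations. Since each $x_i\in B(x_0,\epsilon)$, the Local Semantic Coherence assumption gives $\Pr(f(x_i)=1\mid x_i)=P(x_i)\ge p_{min}$.
\item Use the independence of the $n$ variants. I take this to mean independence both of the sampling of the $x_i$ and of the model's stochastic failure outcomes given the locations. Under it,
\[
\Pr(S_n^c\mid x_1,\dots,x_n)=\prod_{i=1}^n\bigl(1-P(x_i)\bigr)\le (1-p_{min})^n .
\]
\item Take expectations over the $x_i$. The bound is uniform in the locations, so it survives averaging unchanged, and we get $\Pr(S_n)\ge 1-(1-p_{min})^n$.
\item Derive the exponential convergence claim. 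Because $0<p_{min}\le 1$, we have
\[
(1-p_{min})^n\le e^{-p_{min} n},
\]
so $1-P_{succ}\le e^{-p_{min}n}\to 0$ exponentially, at rate $p_{min}$.
\end{enumerate}

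The main obstacle is not computational but definitional. One issue is justifying the independence in step 2, since generator outputs and model failures on related prompts may be correlated. The other is tying the event ``one more failure found'' to ``failure mode identified.'' For independence, I would first state it explicitly as the modeling content of ``$n$ independent semantic variants.'' I would also remark that it can be weakened: the product bound only needs $\Pr(f(x_i)=0\mid \text{past outcomes})\le 1-p_{min}$ for each $i$, which is a martingale-style condition that the coherence assumption supplies pointwise. For the identification step, I would point out that $B(x_0,\epsilon)\subseteq\mathcal{M}$ already places every sample inside the mode. As a side remark, the empirical failure rate over the ball concentrates around a value at least $p_{min}$ by Hoeffding's inequality. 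This gives a consistent estimate certifying $\mathbb{E}_{x\sim\mathcal{M}}[P(x)]\ge\tau$ whenever $p_{min}\ge\tau$, and it decays exponentially in $n$, matching the theorem's qualitative claim.
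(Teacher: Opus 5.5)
Your proposal is correct and is essentially the paper's argument. The paper likewise takes success to be at least one further failure among the $n$ \textsc{Micro} probes, bounds each miss by $1-p_{min}$ via local coherence, multiplies these bounds using independence of the probes, and takes the complement; your conditioning on the sampled locations and the explicit bound $(1-p_{min})^n\le e^{-p_{min}n}$ only make precise what the paper leaves implicit. One caution on your proposed weakening: the condition $\Pr(f(x_i)=0\mid\text{past outcomes})\le 1-p_{min}$ does not follow from the marginal coherence assumption alone, because you still need the outcome at $x_i$ to be conditionally independent of earlier outcomes given $x_i$ (for example, if $f$ equals a single Bernoulli$(p_{min})$ variable on $B(x_0,\epsilon)\setminus\{x_0\}$, coherence holds but the all-miss probability is $1-p_{min}$ for every $n$).
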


\begin{proof}
Let $E_i$ be the event that the $i$-th \textsc{Micro} probe fails to identify a failure. By the assumption of local coherence, $\text{Pr}(E_i) \leq 1 - p_{min}$. Since \textsc{Micro} probes are generated through independent semantic perturbations, the joint probability that all $n$ probes fail to surface a failure is $\text{Pr}(\cap_{i=1}^n E_i) \leq (1 - p_{min})^n$. The probability of identifying at least one additional failure (and thus confirming the mode) is $1 - \text{Pr}(\cap_{i=1}^n E_i)$. This completes the proof.
\end{proof}

\subsection{Regret Bound of \methodName\ Framework}

In this section, we analyze the search efficiency of \methodName\ by modeling its hierarchical \textsc{Macro}-level selection as a tree-structured multi-armed bandit problem. Our goal is to prove that the cumulative regret grows sub-linearly, ensuring that the framework asymptotically converges to the most effective failure discovery strategy.

\begin{definition}[Cumulative Regret]
Let $T$ be the total budget of probes. For each time step $t \in [1, T]$, let $i_t$ be the thematic area selected by the MCTS policy, and $\mu_{i_t}$ be the expected failure discovery rate of that area. Let $\mu^*$ be the optimal discovery rate in the entire search space. The cumulative regret $R_T$ is defined as:
\begin{equation}
    R_T = T\mu^* - \sum_{t=1}^T \mathbb{E}[\mu_{i_t}]
\end{equation}
\end{definition}

\begin{lemma}[Node-Level Regret Bound]
\label{lemma:auer}
Following the finite-time analysis of the UCB1 algorithm \citep{Auer2002FinitetimeAO}, for any node $v$ in the \textsc{Macro} tree with $K$ children, the expected number of times a sub-optimal child $j$ (where $\Delta_j = \mu^* - \mu_j > 0$) is selected after $n$ visits to $v$ is bounded by:
\begin{equation}
    \mathbb{E}[N_j(n)] \leq \frac{8 \ln n}{\Delta_j^2} + \left( 1 + \frac{\pi^2}{3} \right)
\end{equation}
\end{lemma}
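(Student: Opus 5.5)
The plan is to reproduce the classical UCB1 argument of \citet{Auer2002FinitetimeAO} at a single node $v$ of the \textsc{Macro} tree. We treat the $K$ children of $v$ as arms with rewards $\mathbb{I}_{\mathrm{fail}}\in\{0,1\}$, so each reward is bounded in $[0,1]$. We write $\widehat{p}_j$ for the empirical failure rate of child $j$, matching $\widehat{p}(v)$ in the node statistics, and $\mu^*$ for the mean of the best child. Two modelling points have to be fixed first. One must assume that, conditioned on selecting child $j$, the backed-up outcomes are i.i.d. (or at least a martingale difference sequence) with mean $\mu_j$. One must also take the exploration bonus to be $\sqrt{2\ln n/N_j}$, i.e.\ $\beta=\sqrt2$ in Eq.~\eqref{eq:ucb}; this is what produces the constant $8$. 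For a general $\beta$ the argument goes through with $8$ replaced by $4\beta^2$ under the condition $\beta^2\ge 2$.

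Fix a threshold $\ell=\lceil 8\ln n/\Delta_j^2\rceil$. We start from
\[
N_j(n)\le \ell+\sum_{t=K+1}^{n}\mathbbm{1}\!\left[I_t=j,\ N_j(t-1)\ge \ell\right],
\]
where $I_t$ is the child chosen at the $t$-th visit. If $j$ is selected at time $t$, its UCB index is at least that of the optimal child $*$. This means that for some $s$ in $\{1,\dots,t\}$ and some $s_j$ in $\{\ell,\dots,t\}$ we have $\bar X^*_{s}+c_{t,s}\le \bar X^j_{s_j}+c_{t,s_j}$, with $c_{t,s}=\sqrt{2\ln t/s}$. That inequality forces at least one of three events: (a) $\bar X^*_s\le \mu^*-c_{t,s}$; (b) $\bar X^j_{s_j}\ge \mu_j+c_{t,s_j}$; or (c) $\mu^*<\mu_j+2c_{t,s_j}$. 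Event (c) is impossible once $s_j\ge \ell$, because then $2c_{t,s_j}\le 2\sqrt{2\ln n/\ell}\le\Delta_j$.

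Next we bound (a) and (b) with the Chernoff--Hoeffding inequality. Each has probability at most $e^{-2s\,c_{t,s}^2}=t^{-4}$. Taking a union bound over the at most $t^2$ pairs $(s,s_j)$ and summing over $t$ gives
\[
\sum_{t\ge1}2t^2\cdot t^{-4}=2\sum_{t\ge1}t^{-2}=\frac{\pi^2}{3}.
\]
Combining this with $\ell\le 8\ln n/\Delta_j^2+1$ yields the stated bound $\frac{8\ln n}{\Delta_j^2}+1+\frac{\pi^2}{3}$.

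The main obstacle is justifying the i.i.d.\ reward assumption inside an MCTS tree. The statistics at a child are backed up from an adaptively growing subtree, so the child's reward distribution drifts as its subtree changes, and the conditional means are not stationary. The first remedy I would try is to \emph{define} $\mu_j$ as the failure rate of the generator $G_{\textsc{Macro}}(\cdot\mid j)$ applied directly at that child, treating the deeper structure as part of the arm. If that is not acceptable, the fallback is to state the lemma conditionally on a stationarity assumption. A further option is to invoke the drift-tolerant version of UCT due to Kocsis and Szepesv\'ari, which changes only the constants. A secondary point is that the statement uses $\mu^*$ rather than the best child's mean; I would read $\Delta_j$ as relative to the best child of $v$.
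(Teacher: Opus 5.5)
Your proposal is correct and follows the same route as the paper. The paper gives no argument of its own beyond invoking the finite-time UCB1 theorem of Auer et al., and you reproduce that proof faithfully: the threshold $\ell=\lceil 8\ln n/\Delta_j^2\rceil$, the three-event decomposition, Hoeffding giving $t^{-4}$, and the union bound summing to $\pi^2/3$. Your explicit caveats are precisely the assumptions the paper's citation leaves implicit: $\beta=\sqrt{2}$ for the constant $8$, $\mu^*$ read as the best child of $v$, and a stationary i.i.d.\ reward model at each child. Folding the subtree into the arm does not supply that stationarity, so the conditional form of the lemma is the honest one.
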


\begin{theorem}[Search Convergence]
Under the MCTS-UCB selection policy, the average regret $\frac{R_T}{T}$ converges to zero as $T \to \infty$. That is, the framework's selection strategy is asymptotically optimal:
\begin{equation}
    \lim_{T \to \infty} \frac{R_T}{T} = 0
\end{equation}
\end{theorem}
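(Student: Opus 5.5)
The plan is to decompose the cumulative regret by suboptimal arms and then bound each arm's pull count with Lemma~\ref{lemma:auer}. I start at the top level, where the root $r$ chooses between $\rho_{\textsc{Macro}}$ and $\rho_{\textsc{Micro}}$, and then extend to deeper levels. I model each node as a stationary bandit with children of bounded rewards $\mathbb{I}_{\mathrm{fail}}\in\{0,1\}$ and means $\mu_j$. Here $\widehat{p}$ is the empirical mean, and the UCB rule in Eq.~\eqref{eq:ucb} with $\beta=\sqrt{2}$ matches UCB1.

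\textbf{Step 1 (regret decomposition).} Write $R_T=\sum_{j:\Delta_j>0}\Delta_j\,\mathbb{E}[N_j(T)]$. This is the standard identity obtained from $\sum_t(\mu^*-\mathbb{E}[\mu_{i_t}])$ by grouping the time steps according to which arm was chosen.

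\textbf{Step 2 (per-arm bound).} Apply Lemma~\ref{lemma:auer} at the node. This gives
\[
R_T\le\sum_{j:\Delta_j>0}\Big(\frac{8\ln T}{\Delta_j}+\big(1+\tfrac{\pi^2}{3}\big)\Delta_j\Big)=O(K\ln T),
\]
since $K$ is finite ($K\le W_{\max}$ by the width cap).

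\textbf{Step 3 (conclusion).} Divide by $T$ to get $R_T/T=O(K\ln T/T)\to0$. For a depth-$D$ tree, I argue by induction on depth. At each internal node, the child chosen suboptimally is visited only $O(\ln n)$ times out of its parent's $n$ visits. So the fraction of the $T$ probes routed to a suboptimal subtree vanishes. Regret is bounded by these misrouted probes plus the regret inside the optimal subtree, and the latter is $o(T)$ by the induction hypothesis. Because $W_{\max}$ and $D$ are finite, the sum of the finitely many $o(T)$ terms remains $o(T)$.

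\textbf{Main obstacle.} The hardest step is justifying Lemma~\ref{lemma:auer} at interior nodes. Their reward streams are non-stationary, because the subtree policies below them are themselves still learning, and the tree also grows as nodes are expanded. So i.i.d.\ UCB1 does not apply directly.

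My first attempt is a drift condition in the style of Kocsis--Szepesv\'ari UCT. I would show inductively that the empirical mean of each child's subtree converges to its limit at rate $O(\ln n/n)$, with concentration of the same type as Hoeffding's inequality. With that in hand, the UCB1 argument goes through with a modified constant: the $\Delta_j^2$ in Lemma~\ref{lemma:auer} is replaced by $(\Delta_j/2)^2$ after a burn-in period.

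If the drift condition is too delicate, the fallback is to state the result under an explicit assumption that child means are stationary and the tree is finite. Under that assumption, the top-down induction above suffices, and it yields logarithmic regret, which is more than the sublinear rate the theorem requires.
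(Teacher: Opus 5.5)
Your proposal is sound and follows the paper's route: node-level UCB1 control from Lemma~\ref{lemma:auer} is lifted to the finite-depth tree via the Kocsis--Szepesv\'ari UCT analysis, then summed over finitely many nodes to give logarithmic, hence sublinear, regret; you are also more explicit than the paper about the non-stationarity of interior nodes, which the paper handles only by asserting that \textsc{Micro} probing lets it ``satisfy the conditions for logarithmic regret bounds''. One caveat is that your fallback over-claims, because stationary leaf rewards do not make interior reward streams stationary (and assuming interior stationarity contradicts the still-learning policy beneath them), so Lemma~\ref{lemma:auer} cannot be applied verbatim there and a logarithmic rate really does need the drift analysis---although for the $o(T)$ claim alone an almost-sure consistency induction already suffices (for any $\beta>0$ every child is visited infinitely often, so child averages converge to their subtree optima and suboptimal children eventually get only an $O(\ln n)$ share of visits).
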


\begin{proof}
The proof follows from the extension of UCB1 to tree structures, as established in the UCT algorithm \citep{Kocsis2006BanditBM}. 
First, by Lemma \ref{lemma:auer}, the regret at any individual decision node in the \textsc{Macro} layer is $O(\ln T)$, which is sub-linear. 

Second, \citet{Kocsis2006BanditBM} proved that in a tree of finite depth $D$, the probability of selecting the optimal leaf (the most significant failure mode) converges to 1 as the number of samples $T$ increases.
However, tree search can be sensitive to the "depth-first" bias. As noted by \citet{Coquelin2007BanditAF}, while the worst-case regret in MCTS can be problematic, the introduction of the \textsc{Micro}-probing stage in \methodName\ serves as a variance reduction mechanism. By providing more stable and truthful empirical estimates $\hat{\mu}_{i,t}$ through agentic verification, we satisfy the conditions for logarithmic regret bounds. 
Summing the regret across the tree, we obtain $R_T = O(\text{poly}(D) \cdot \ln T)$. Since $\lim_{T \to \infty} \frac{\ln T}{T} = 0$, the framework's discovery rate $\frac{1}{T} \sum \mu_{i_t}$ converges to the optimal rate $\mu^*$.
\end{proof}

\subsection{Analysis of Time Complexity}

\textbf{Notation Recap.} Throughout this analysis, we treat the cost of a single LLM query, embedding computation, and induction call as constant, and focus on the number of such calls and the remaining algorithmic overhead. Following prior notation in main text, $T$ denotes the maximum number of MCTS probing simulations, $D$ and $W$ the maximum search depth and branching factor, $M$ the failure pool size, $K$ the number of failure clusters, and $d$ the embedding dimension.

\textbf{Stage 1: MCTS-based Probing.}
The probing stage performs at most $T$ Monte Carlo Tree Search simulations.
Each simulation expands at most one new node and requires a constant number of LLM queries for test generation, execution, and verification.
Let $C_{\text{query}}$ denote the cost of a single LLM query.
Accordingly, the total cost of LLM queries in this stage scales linearly with the simulation budget, i.e., $O(T C_{\text{query}})$.

In addition to LLM calls, each simulation traverses the search tree to a bounded depth $D$ and selects among at most $W$ child nodes at each level.
The tree traversal and update cost of a single simulation is $O(DW)$, yielding a total non-LLM computational cost of $O(TDW)$ for the probing stage.

\textbf{Stage 2: Failure Embedding and Clustering.}
During probing, at most $M$ failure cases are collected.
Each failure case is embedded once.
Let $C_{\text{embed}}$ denote the cost of a single embedding computation.
The total embedding cost is thus $O(M C_{\text{embed}})$.

The resulting $M$ failure embeddings are subsequently partitioned into $K$ clusters.
Using a standard clustering algorithm with a bounded number of iterations (e.g., $k$-means), the clustering cost scales linearly with the number of data points, the number of clusters, and the embedding dimension.
Consequently, the total clustering cost of this stage is $O(MKd)$.

\textbf{Stage 3: Boundary-Aware Collection and Induction.}
In the final stage, each failure cluster is processed to extract representative and boundary cases.
Computing distances between failure embeddings and their corresponding cluster centers requires a single pass over all failure cases, resulting in a cost of $O(Md)$.

For each selected boundary case, a corresponding non-failure reference is identified among previously explored samples.
Since the number of explored samples is bounded by the probing budget, this step incurs an additional cost of $O(T)$.

Finally, one induction step is performed per cluster to summarize the corresponding failure mode.
Let $C_{\text{induct}}$ denote the cost of a single induction call.
The total induction cost of this stage is therefore $O(K C_{\text{induct}})$.
Combining the above components, the overall cost of the third stage is $O(Md + T + K C_{\text{induct}})$.

\textbf{Overall Complexity.}
Combining all three stages, the overall time complexity of the proposed method is
$O(TDW + T C_{\text{query}}) + O(M C_{\text{embed}} + MKd) + O(Md + T + K C_{\text{induct}})$.
In typical settings where $D$, $W$, $d$, and $K$ are small constants, the overall computational cost scales linearly with the probing budget $T$ and the number of collected failure cases $M$.

\newpage

\section{Human Evaluation}
\label{app:appendix_human_eval}

\subsection{Evaluation Protocols}

We complement automatic verification with the following human-evaluation protocols to validate (i) the quality of synthesized test cases and ground-truths, (ii) the correctness of failure evidence, and (iii) the reliability of extracted failure modes. Our framework assumes a verified reference answer $y^\star(x)$ and a verifier $V(\cdot,\cdot)$ for each test case $x$ (with model output $y=f_\theta(x)$), and it operationalizes \emph{failure modes} as clusters of recurring failures in an embedding space. All protocols are implemented as lightweight web forms and are applied to stratified samples from both \textsc{Macro} and \textsc{Micro} probing regimes.


\begin{wrapfigure}{r}{0.45\linewidth}
    \centering
    \includegraphics[width=\linewidth]{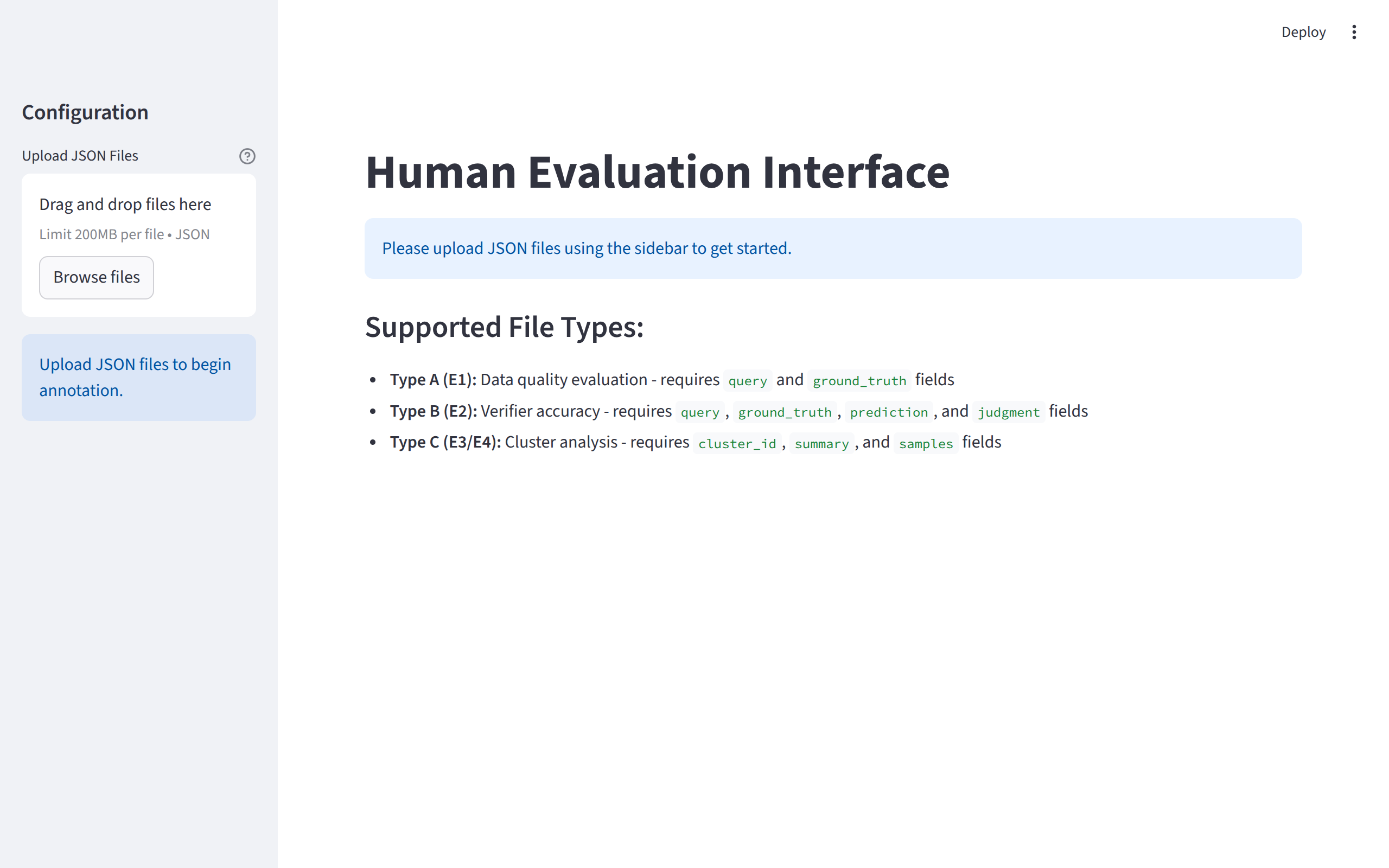}
    \caption{Human evaluation interface (1).}
    \label{fig:Interface-1}
\end{wrapfigure}

\textbf{E1: Test-case \& ground-truth quality (Question Correctness).} Because our pipeline synthesizes questions and corresponding references, we must ensure that generated items are answerable, unambiguous, and paired with correct ground-truths. This directly audits the data-quality control goal of tool-augmented generation and verification.

Annotators are shown $(x, y^\star(x))$ together with the evidence used to support $y^\star(x)$. They label: (i) \emph{Answerable} (Yes/No/Unsure), (ii) \emph{Unambiguous} (Yes/No/Unsure), (iii) \emph{Ground-truth correct} (Yes/No/Unsure), and (iv) a 5-point Likert score for relevance to systematic probing. When marking any item as ``No'', annotators provide a short rationale (e.g., missing constraints, multiple valid answers, evidence mismatch). We report acceptance rates and inter-annotator agreement.

\textbf{E2: Verifier accuracy (LLM Judge).}
We rely on an LLM-based verifier/judge to determine whether the model output $y$ matches the reference $y^\star(x)$. To validate the reliability of these automatic judgments, we sample labeled instances and ask annotators to review $x$, $y$, and $y^\star(x)$ (with minimal supporting evidence when necessary) and provide a binary label indicating whether $y$ is \emph{Correct} or \emph{Incorrect}. We then compare the human label with the verifier decision (e.g., whether it predicts $V(y,y^\star)=0$) and report the verifier accuracy, along with false-positive and false-negative rates when applicable.

\begin{wrapfigure}{l}{0.45\linewidth}
    \centering
    \includegraphics[width=\linewidth]{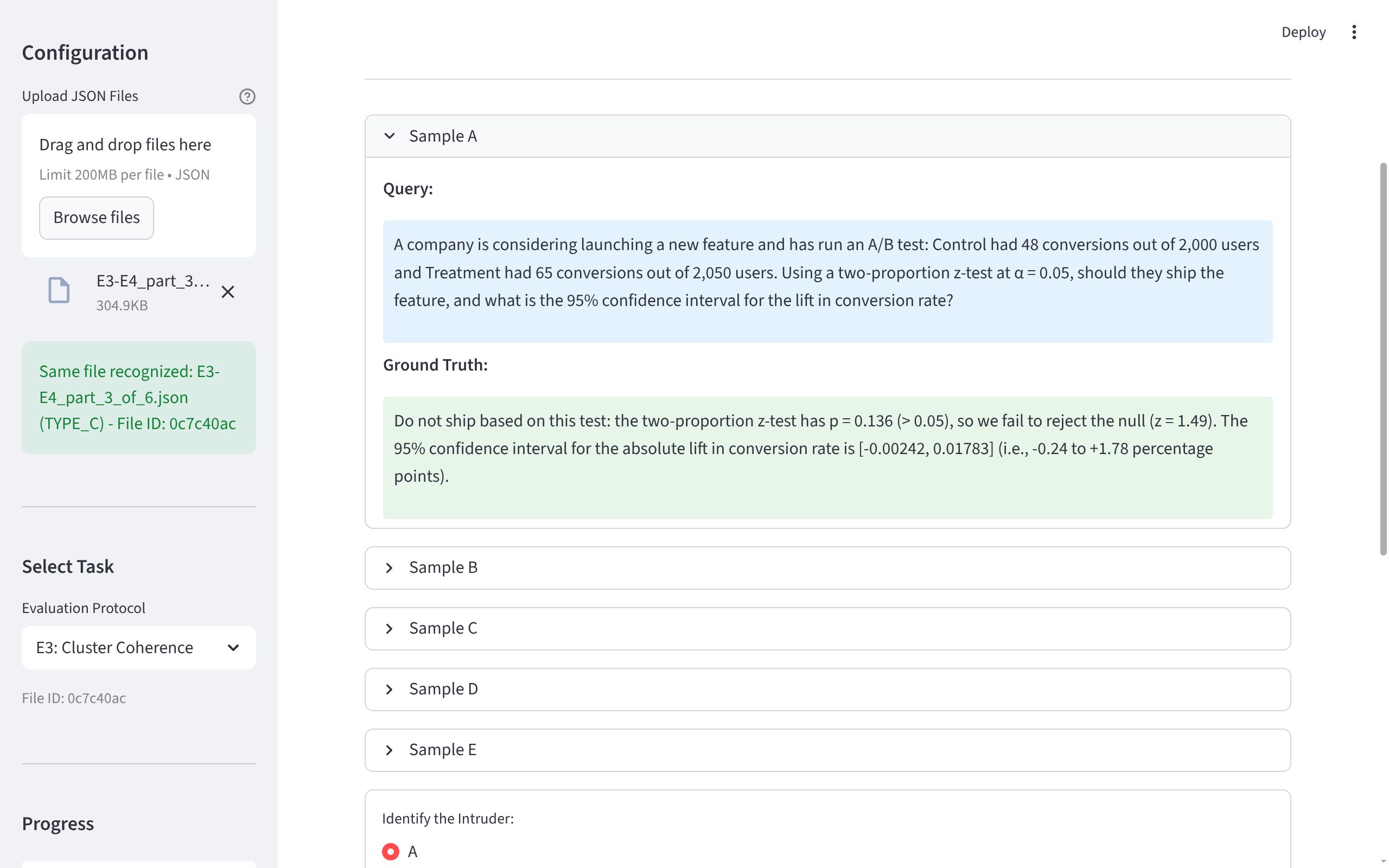}
    \caption{Human evaluation interface (2).}
    \label{fig:Interface-2}
\end{wrapfigure}

\textbf{E3: Cluster coherence (Intruder Test).} A core claim of the method is that it discovers \emph{structured failure modes} rather than isolated errors by clustering failures in representation space. We therefore test whether each cluster is internally coherent and separable from other clusters.

For each evaluated cluster, annotators see 5 representative failure cases: 4 sampled from the target cluster and 1 \emph{intruder} sampled from a different cluster (matched by topic difficulty when possible). Annotators choose the intruder (A--E), rate cluster coherence on a 1--5 Likert scale, and optionally describe the common mechanism shared by the non-intruder items. We report intruder-identification accuracy and coherence scores.

\textbf{E4: Failure-mode description quality (induction Correctness).}
After clustering, the system summarizes each cluster into a natural-language failure-mode description. We evaluate whether these descriptions are faithful to the underlying evidence and specific enough to be diagnostically useful. This corresponds to the planned induction correctness audit.

Annotators are shown (i) several representative failures from one cluster and (ii) a candidate failure-mode description produced by the LLM for induction (optionally alongside a simple baseline description). They rate the candidate on three 1--5 Likert dimensions: \emph{Faithfulness} (supported by the cluster evidence), \emph{Specificity} (mechanism-level rather than generic), and \emph{Diagnostic usefulness} (actionable for analysis/mitigation). When a baseline is shown, annotators also provide an overall preference (A/B/Tie) and optional edits to improve the candidate.



\subsection{Human Evaluation Metrics}
\label{sec:human_eval_metrics}

We evaluate the proposed method using four human evaluation metrics.
\textbf{E1}, \textbf{E2}, and \textbf{E4} are reported as \emph{human agreement rates}, measuring the consistency between human judgments and the intended labels or model-generated outputs.
Specifically, E1 measures agreement on test-case and ground-truth quality, E2 measures agreement between human judgments and the LLM-based verifier (i.e., LLM-as-a-Judge), and E4 measures agreement on the correctness and usefulness of induced failure-mode descriptions.

\textbf{E3} is reported as an \emph{accuracy} metric, where annotators identify an intruder instance in each cluster. Higher intruder-identification accuracy indicates stronger cluster coherence and better separation between discovered failure modes.

\subsection{Human Evaluation Details}
The human evaluation was conducted by five annotators, including four Ph.D. students and one undergraduate student in computer science. All annotators have strong English proficiency and solid technical backgrounds in machine learning and natural language processing, enabling them to reliably assess question answerability, ambiguity, correctness, and the quality of induced failure-mode descriptions.

Prior to annotation, all annotators were provided with detailed guidelines and calibration examples to ensure consistent interpretation of the evaluation criteria. Each instance was independently annotated by all five annotators.

We show some of the human evaluation interfaces in \autoref{fig:Interface-1} and \autoref{fig:Interface-2} (both wrapped alongside the protocol descriptions above).


\section{Extensibility and Custom Tool Integration}
\label{app:tool_integration}

\methodName\ is designed as a modular framework that allows researchers to adapt the probing process to specific domains by integrating custom tools. The implementation decouples the search algorithm from the domain-specific execution logic via a pluggable tool registry based on the Model Context Protocol. The core of our extensibility lies in the \texttt{probellm.tools} module which manages tool registration and invocation through three key components. First, the \texttt{ToolSpec} defines the interface exposed to the LLM including the tool name and a JSON Schema of its input parameters. Second, the \texttt{LocalMCPTool} executes the actual logic by wrapping external libraries and returning a structured dictionary containing execution results or error states. Third, the \texttt{ToolRegistry} maps specifications to handlers and routes execution requests during the MCTS expansion phase. This design allows users to inject domain-specific verification or generation logic without modifying the core search algorithm.

To demonstrate the flexibility of \methodName\ for chemistry-oriented weakness discovery \citep{mirza2024large}, we illustrate how to integrate a chemistry-specific tool from the ChemOrch library \citep{huang2025chemorch}. We utilize the underlying \texttt{mol\_to\_smiles} function to create a tool that standardizes SMILES strings into specific chemical formats. Since the language model interacts via text, the tool is designed to accept a SMILES string, convert it internally to an RDKit molecule object, and output the reformatted string. We first define the tool specification to help the LLM decide when to invoke this tool during test case generation.

\begin{lstlisting}[language=Python, caption={Defining the Tool Specification}]
from probellm.tools import ToolSpec

mol_to_smiles_spec = ToolSpec(
    name="mol_to_smiles",
    description="Converts an RDKit molecule to a SMILES string with optional stereochemistry and Kekule form.",
    input_schema={
        "type": "object",
        "properties": {
            "smiles_input": {
                "type": "string",
                "description": "Input SMILES string to convert the molecule from"
            },
            "isomeric": {
                "type": "boolean",
                "description": "Whether to include stereochemistry information",
                "default": True
            },
            "kekule": {
                "type": "boolean",
                "description": "Whether to output the Kekule form",
                "default": False
            }
        },
        "required": ["smiles_input"]
    }
)
\end{lstlisting}

Next, we implement a handler to bridge the interface with the \texttt{rdkit} library while ensuring robust error handling. The handler extracts the string parameter, performs the object conversion and function call, and returns a standardized result dictionary. Finally, we register the tool into the system to make it available for the MCTS pipeline.

\begin{lstlisting}[language=Python, caption={Handler Implementation and Registration}]
from probellm.tools import ToolRegistry, LocalMCPTool
from rdkit import Chem

def mol_to_smiles(mol, isomeric=True, kekule=False):
    if kekule:
        Chem.Kekulize(mol)
        return Chem.MolToSmiles(mol, kekuleSmiles=True)
    return Chem.MolToSmiles(mol, isomericSmiles=isomeric)

def mol_to_smiles_handler(arguments):
    try:
        smiles_input = arguments.get("smiles_input")
        isomeric = arguments.get("isomeric", True)
        kekule = arguments.get("kekule", False)
        
        mol = Chem.MolFromSmiles(smiles_input)
        if mol is None:
            return {"success": False, "error": f"Invalid SMILES: {smiles_input}"}
            
        result = mol_to_smiles(mol, isomeric=isomeric, kekule=kekule)
        return {"success": True, "smiles": result}
    except Exception as e:
        return {"success": False, "error": str(e)}

# Register the tool for use in the pipeline
registry = ToolRegistry()
registry.register(LocalMCPTool(mol_to_smiles_spec, mol_to_smiles_handler))
\end{lstlisting}

By following this pattern, \methodName\ can be easily extended to other specialized domains to make it a versatile framework for broad weakness discovery.

\section{Examples of Discovered Failure Modes}
\label{sec:failure_modes_examples}

In this section, we illustrate the granularity and interpretability of the structured weaknesses identified by \methodName. Unlike automated methods that simply enumerate isolated error cases, our framework induces coherent failure modes that explain \emph{how} and \emph{why} a model fails within a specific domain. We present two failure modes below, formatted to highlight the transition from high-level pattern description to concrete evidence.

\subsection{Mode: Schema Design Constraint Violation}

\begin{tcolorbox}[
        enhanced,
        colback=CentralFail,
        colframe=stageblue,
        title=\textbf{Cluster 7: Constraint Drift in Relational Schema Design},
        fonttitle=\bfseries\small,
        boxrule=0.8pt,
        left=5pt, right=5pt, top=5pt, bottom=5pt
    ]
    This failure mode emerges in database modeling tasks requiring adherence to normalization principles and integrity constraints. The model generates plausible schema structures but systematically violates fundamental design rules that prevent data anomalies. Specifically, it reintroduces redundancy by storing denormalized attributes—such as placing \texttt{author} directly in the \texttt{Book} table while simultaneously defining separate \texttt{Author} and \texttt{BookAuthor} junction tables—thereby contradicting the stated decomposition goal. Additionally, proposed CHECK constraints frequently reference subqueries or other tables, which violates SQL standard limitations and renders the constraints unenforceable in most relational database systems. The root cause appears to be pattern-matching against common schema templates without verifying constraint semantics or rigorously applying normalization rules to eliminate redundancy.

    \tcblower
    \small
    \textbf{Example Instance}

    \textit{Query:} ``Design a normalized relational database schema for a public library system... justify how your design prevents update anomalies.''

    \textit{Ground Truth Schema (excerpt):} \texttt{BOOK(book\_id PK, isbn, title, publisher);} \texttt{AUTHOR(author\_id PK, name);} \texttt{BOOK\_AUTHOR(book\_id FK, author\_id FK, PK(book\_id, author\_id))};

    \textit{Model Prediction (excerpt):} \texttt{Book(book\_id PK, isbn, title, \textbf{author}, publisher...);} \texttt{Author(author\_id PK, full\_name...);} \texttt{BookAuthor(book\_id FK, author\_id FK, PK...)};

    \textit{Diagnostic Analysis:} The model schema retains an \texttt{author} column inside the \texttt{Book} table despite also defining a normalized \texttt{Author} junction table. This reintroduces redundancy and update anomalies—changing an author's name would require updating both tables inconsistently. Furthermore, two of the model's proposed CHECK constraints embed subqueries with \texttt{NOT EXISTS} clauses referencing other tables, which standard SQL disallows in CHECK constraints, making them syntactically invalid.
\end{tcolorbox}

\subsection{Mode: Bitwise Arithmetic Integrity}

\begin{tcolorbox}[
        enhanced,
        colback=CentralFail,
        colframe=stageblue,
        title=\textbf{Cluster 3: Bitwise Pipeline Slip \& Inconsistent Verification},
        fonttitle=\bfseries\small,
        boxrule=0.8pt,
        left=5pt, right=5pt, top=5pt, bottom=5pt
    ]
    We identify a distinct failure mode in the domain of low-level arithmetic and cryptographic primitives, specifically probing multi-step 32-bit operations. The failure manifests as ``pipeline slips,'' where a minor calculation error in one stage—such as an off-by-one rotation or incorrect carry handling—propagates through subsequent XOR masks and modular additions, resulting in a completely incorrect final value. A critical feature of this mode is the inconsistency between generation and verification: the model often generates a self-verification step that contradicts its own previous derivation or implicitly assumes undefined function behaviors. This suggests a systematic weakness in maintaining deterministic bookkeeping for 32-bit boundary conditions and a tendency to hallucinate function definitions rather than halting execution.

    \tcblower
    \small
    \textbf{Example Instance}

    \textit{Query:} [Complex bit manipulation involving \texttt{ROTR}, \texttt{XOR}, and modular addition steps...]

    \textit{Ground Truth:} [Specific 32-bit Hex Value ending in ...A3]

    \textit{Model Prediction:} [Incorrect Hex Value ending in ...B4]

    \textit{Diagnostic Analysis:} The model fails to maintain state consistency across multiple reversible operations. Although the individual arithmetic steps appear plausible, the accumulated error leads to a verification failure where the inverse function does not reconstruct the original input, yet the model outputs the result confidently.
\end{tcolorbox}

\end{document}